%%
%% This is file `sample-sigconf.tex',
%% generated with the docstrip utility.
%%
%% The original source files were:
%%
%% samples.dtx  (with options: `sigconf')
%% 
%% IMPORTANT NOTICE:
%% 
%% For the copyright see the source file.
%% 
%% Any modified versions of this file must be renamed
%% with new filenames distinct from sample-sigconf.tex.
%% 
%% For distribution of the original source see the terms
%% for copying and modification in the file samples.dtx.
%% 
%% This generated file may be distributed as long as the
%% original source files, as listed above, are part of the
%% same distribution. (The sources need not necessarily be
%% in the same archive or directory.)
%%
%% The first command in your LaTeX source must be the \documentclass command.
\documentclass[sigconf]{acmart}

%% NOTE that a single column version is required for 
%% submission and peer review. This can be done by changing
%% the \doucmentclass[...]{acmart} in this template to 
%% \documentclass[manuscript,screen]{acmart}
%% 
%% To ensure 100% compatibility, please check the white list of
%% approved LaTeX packages to be used with the Master Article Template at
%% https://www.acm.org/publications/taps/whitelist-of-latex-packages 
%% before creating your document. The white list page provides 
%% information on how to submit additional LaTeX packages for 
%% review and adoption.
%% Fonts used in the template cannot be substituted; margin 
%% adjustments are not allowed.

%%
%% \BibTeX command to typeset BibTeX logo in the docs
\AtBeginDocument{%
  \providecommand\BibTeX{{%
    \normalfont B\kern-0.5em{\scshape i\kern-0.25em b}\kern-0.8em\TeX}}}

%% Rights management information.  This information is sent to you
%% when you complete the rights form.  These commands have SAMPLE
%% values in them; it is your responsibility as an author to replace
%% the commands and values with those provided to you when you
%% complete the rights form.
% \setcopyright{acmcopyright}
% \copyrightyear{2018}
% \acmYear{2018}
% \acmDOI{10.1145/1122445.1122456}

%% These commands are for a PROCEEDINGS abstract or paper.
% \acmConference[Woodstock '18]{Woodstock '18: ACM Symposium on Neural
%   Gaze Detection}{June 03--05, 2018}{Woodstock, NY}
% \acmBooktitle{Woodstock '18: ACM Symposium on Neural Gaze Detection,
%   June 03--05, 2018, Woodstock, NY}
% \acmPrice{15.00}
% \acmISBN{978-1-4503-XXXX-X/18/06}

%%
%% Submission ID.
%% Use this when submitting an article to a sponsored event. You'll
%% receive a unique submission ID from the organizers
%% of the event, and this ID should be used as the parameter to this command.
%%\acmSubmissionID{123-A56-BU3}

%%
%% The majority of ACM publications use numbered citations and
%% references.  The command \citestyle{authoryear} switches to the
%% "author year" style.
%%
%% If you are preparing content for an event
%% sponsored by ACM SIGGRAPH, you must use the "author year" style of
%% citations and references.
%% Uncommenting
%% the next command will enable that style.
%%\citestyle{acmauthoryear}

%%
%% end of the preamble, start of the body of the document source.

\begin{document}

%%
%% The "title" command has an optional parameter,
%% allowing the author to define a "short title" to be used in page headers.
\title{Tackling the Imbalance for GNNs}

%%
%% The "author" command and its associated commands are used to define
%% the authors and their affiliations.
%% Of note is the shared affiliation of the first two authors, and the
%% "authornote" and "authornotemark" commands
%% used to denote shared contribution to the research.
\author{Rui Wang}
\email{rwang\_ruiwang@tju.edu.cn}
% \orcid{1234-5678-9012}
% \author{G.K.M. Tobin}
% \authornotemark[1]
% \email{webmaster@marysville-ohio.com}
\affiliation{%
  \institution{National Center for Applied Mathematics, Tianjin University}
%   \streetaddress{P.O. Box 1212}
  \city{Tianjin}
%   \state{Ohio}
  \country{China}
%   \postcode{43017-6221}
}

\author{Weixuan Xiong}
\email{weixuan@tju.edu.cn}
\affiliation{%
  \institution{National Center for Applied Mathematics, Tianjin University}
%   \streetaddress{P.O. Box 1212}
  \city{Tianjin}
%   \state{Ohio}
  \country{China}
%   \postcode{43017-6221}
}

\author{Qinghu Hou}
\email{qh\_hou@tju.edu.cn}
\affiliation{%
  \institution{National Center for Applied Mathematics, Tianjin University}
%   \streetaddress{P.O. Box 1212}
  \city{Tianjin}
%   \state{Ohio}
  \country{China}
%   \postcode{43017-6221}
}

\author{Ou Wu}
\authornote{Corresponding author.}
\email{wuou@tju.edu.cn}
\affiliation{%
  \institution{National Center for Applied Mathematics, Tianjin University}
%   \streetaddress{P.O. Box 1212}
  \city{Tianjin}
%   \state{Ohio}
  \country{China}
%   \postcode{43017-6221}
}

% \author{Huifen Chan}
% \affiliation{%
%   \institution{Tsinghua University}
%   \streetaddress{30 Shuangqing Rd}
%   \city{Haidian Qu}
%   \state{Beijing Shi}
%   \country{China}}

% \author{Charles Palmer}
% \affiliation{%
%   \institution{Palmer Research Laboratories}
%   \streetaddress{8600 Datapoint Drive}
%   \city{San Antonio}
%   \state{Texas}
%   \country{USA}
%   \postcode{78229}}
% \email{cpalmer@prl.com}

% \author{John Smith}
% \affiliation{%
%   \institution{The Th{\o}rv{\"a}ld Group}
%   \streetaddress{1 Th{\o}rv{\"a}ld Circle}
%   \city{Hekla}
%   \country{Iceland}}
% \email{jsmith@affiliation.org}

% \author{Julius P. Kumquat}
% \affiliation{%
%   \institution{The Kumquat Consortium}
%   \city{New York}
%   \country{USA}}
% \email{jpkumquat@consortium.net}

%%
%% By default, the full list of authors will be used in the page
%% headers. Often, this list is too long, and will overlap
%% other information printed in the page headers. This command allows
%% the author to define a more concise list
%% of authors' names for this purpose.
% \renewcommand{\shortauthors}{Trovato and Tobin, et al.}

%%
%% The abstract is a short summary of the work to be presented in the
%% article.
\begin{abstract} 
Different from deep neural networks for non-graph data classification, graph neural networks (GNNs) leverage the information exchange between nodes (or samples) when representing nodes. The category distribution shows an imbalance or even a highly-skewed trend on nearly all existing benchmark GNN data sets. The imbalanced distribution will cause misclassification of nodes in the minority classes, and even cause the classification performance on the entire data set to decrease. This study explores the effects of the imbalance problem on the performances of GNNs and proposes new methodologies to solve it. First, a node-level index, namely, the label difference index ($LDI$), is defined to quantitatively analyze the relationship between imbalance and misclassification. The less samples in a class, the higher the value of its average $LDI$; the higher the $LDI$ of a sample, the more likely the sample will be misclassified. We define a new loss and propose four new methods based on $LDI$. Experimental results indicate that the classification accuracies of the three among our proposed four new methods are better in both transductive and inductive settings. The $LDI$ can be applied to other GNNs.
\end{abstract}

%%
%% The code below is generated by the tool at http://dl.acm.org/ccs.cfm.
%% Please copy and paste the code instead of the example below.
%%
% \begin{CCSXML}
% <ccs2012>
%  <concept>
%   <concept_id>10010520.10010553.10010562</concept_id>
%   <concept_desc>Computer systems organization~Embedded systems</concept_desc>
%   <concept_significance>500</concept_significance>
%  </concept>
%  <concept>
%   <concept_id>10010520.10010575.10010755</concept_id>
%   <concept_desc>Computer systems organization~Redundancy</concept_desc>
%   <concept_significance>300</concept_significance>
%  </concept>
%  <concept>
%   <concept_id>10010520.10010553.10010554</concept_id>
%   <concept_desc>Computer systems organization~Robotics</concept_desc>
%   <concept_significance>100</concept_significance>
%  </concept>
%  <concept>
%   <concept_id>10003033.10003083.10003095</concept_id>
%   <concept_desc>Networks~Network reliability</concept_desc>
%   <concept_significance>100</concept_significance>
%  </concept>
% </ccs2012>
% \end{CCSXML}

% \ccsdesc[500]{Computer systems organization~Embedded systems}
% \ccsdesc[300]{Computer systems organization~Redundancy}
% \ccsdesc{Computer systems organization~Robotics}
% \ccsdesc[100]{Networks~Network reliability}
% \usepackage{fancyhdr}
\pagestyle{empty}
%%
%% Keywords. The author(s) should pick words that accurately describe
%% the work being presented. Separate the keywords with commas.
% \keywords{Graph neural networks, Imbalance problem, Label difference index}

\settopmatter{printacmref=false}
\renewcommand\footnotetextcopyrightpermission[1]{}

%% A "teaser" image appears between the author and affiliation
%% information and the body of the document, and typically spans the
%% page.
% \begin{teaserfigure}
%   \includegraphics[width=\textwidth]{sampleteaser}
%   \caption{Seattle Mariners at Spring Training, 2010.}
%   \Description{Enjoying the baseball game from the third-base
%   seats. Ichiro Suzuki preparing to bat.}
%   \label{fig:teaser}
% \end{teaserfigure}

%%
%% This command processes the author and affiliation and title
%% information and builds the first part of the formatted document.
\maketitle

\section{Introduction}
As an increasing number of applications involve graph data, researchers have designed graph neural networks (GNNs) to process graph data. GNNs generally use the connections between nodes to exchange information in neighborhoods to obtain a better representation for each node. Therefore, a unique problem faced by GNNs is misclassification caused by neighbor nodes. Nodes in minority classes are more likely to be neighbored by those in majority classes, resulting in low accuracy.
% Oversmoothing was first investigated by Li et al.~\cite{Li_Han_Wu_2018_22}, who proved that graph convolution is a type of \textit{Laplacian} smoothing. After applying \textit{Laplacian} smoothing repeatedly, the features of all nodes in the graph will converge to similar or even the same vectors, called the oversmoothing problem. The oversmoothing will make it impossible to distinguish among different classes of nodes, resulting in a decrease in classification performance.

Real-world data usually conforms to an imbalanced even a highly-skewed distribution, in which majority classes occupy most of the data proportion. Deep neural networks also suffer when the training data are highly imbalanced~\cite{Cui_2019_CVPR_1, buda2018systematic, ren18l2rw_4}. 
% We observed that almost all GNN benchmark data sets show imbalanced or highly-skewed distributions. 
% The distributions of samples on categories of five common graph datasets are shown in Figure \ref{fig:fig3} in Section 3.2.

% Figures~\ref{fig:fig1} and \ref{fig:fig3}.

% \begin{figure}[]
%     \centering
%     \includegraphics[width=1.02\linewidth]{fig1-2.jpg}
%     \vspace{-0.2in}
%     \caption{Category distributions on three different datasets. Other distributions are shown in Figure \ref{fig:fig3}.}
%     \vspace{-0.2in}
%     \label{fig:fig1}
% \end{figure}

This study investigates the imbalance problem for GNNs, and reveals more details for the relationship between imbalance and misclassification on graph data. Furthermore, based on the state-of-the-art algorithms for dealing with the imbalance of non-graph data, new methodologies are proposed. First, we define the label difference index ($LDI$) to measure the likelihood of a node being misclassified based on the category distribution of its neighborhood. The larger the index, the more likely the node can be misclassified. The $LDI$ of a node is determined by the category distribution of its neighbor nodes. Nodes neighbored by heterophily nodes\footnote{Heterophily nodes are the samples (or nodes) that belong to different categories \cite{zhu2020graph}. } are more likely to be with larger $LDI$s, and easy to be negatively affected, resulting in misclassification. In addition, $LDI$ is also affected by the global category distribution of the entire data set, and statistics show that the average $LDI$s of the nodes in minority classes are usually large. Thus, the relationship between the imbalance and misclassification is established. Second, we define a new loss and propose four new methods, namely, improved focal loss (iFL), Graph Re-sampling (GRS), Graph Re-weighting (GRW), Graph Metric Learning (GML), and Graph Bilateral-branch Network (GBBN). Experiments on several graph benchmark data sets show that except for GML, the new methods are better than the original GNNs. Imbalanced sampling with $LDI$ can help to further improve performance.

Our contributions are summarized as follows:
\begin{itemize}
\item[(1)] A node-level index (i.e., $LDI$) is defined to characterize the neighborhood of a node (or a sample) in a graph. On the basis of $LDI$, the relationship between imbalance and misclassification is analyzed and useful observations are obtained.
\item[(2)] A new loss and four new methods are proposed based on the index $LDI$. Extensive experiments on benchmark data sets indicate that the proposed methodology achieves better results than existing  imbalance learning methods on graphs. Further, several classical GNNs are improved by utilizing $LDI$.
\end{itemize}

\section{Related work}
Classical GNNs include GCN~\cite{kipf2017semi_11}, SGC \cite{pmlr-v97-wu19e_12}, GAT~\cite{velickovic2018graph_13}, and so on. Kipf et al.~\cite{kipf2017semi_11} expanded the traditional convolution neural network (CNN) on high-dimensional graph data to obtain GCN. GCN iteratively updated each node's representation through the message exchange with their neighbor nodes. Wu et al.~\cite{pmlr-v97-wu19e_12} reduced the complexity of GCN by repeatedly eliminating the nonlinearity between GCN layers and folding the resulting function into a linear transformation to obtain SGC. Veli{\v{c}}kovi{\'{c}} et al.~\cite{velickovic2018graph_13} proposed GAT based on the attention mechanism to classify graph data.

In the research of graph, researchers usually define indices (e.g., degree and centrality) to characterize the properties of a graph. Likewise, a number of indices are also defined to measure the properties of the involved graphs in the research of GNNs. For example, graph homophily was proposed by Pei et al. \cite{pei2020geom} to characterize the degree that similar nodes connect together. Graph heterophily was proposed by Zhu et al. \cite{zhu2020beyond} to measure the graph homophily level. Two graph smoothness metrics, namely, feature smoothness and label smoothness, were proposed by Hou et al. \cite{Hou2020Measuring_2} to help understand the use of graph information in GNNs. Almost all existing graph indices in GNNs are graph-level or category-level\footnote{Several indices first define a node-level metrics and using the average of all nodes as the graph-level or category-level index. The node-level index is then no longer used.}. These indices are only used to analyze graph characteristics and the learning performances. They are not involved in the training process. To our knowledge, there is no node-level index. This study will define a node-level index which can be directly used in both characteristics analyses and model training.

Machine learning has been increasingly applied in recent years. The imbalance has become a hot research topic. Kang et al.~\cite{kang2020decoupling_19} divided the learning of a classification model into two steps. Zhou et al. \cite{Zhou_2020_CVPR_9} proposed a new learning model with two branches, both of which involve re-sampling. Zong et al. \cite{zong2020gnn} proposed GNN-XML to overcome the imbalance problem in extreme multi-label text classification. Using re-weighting, Cui et al.~\cite{Cui_2019_CVPR_1} used a better weighting design to achieve a better imbalance classifier. Liu et al.~\cite{Liu_2020_CVPR_20} created some virtual samples around the minority classes samples to increase the number of minority classes samples. Liu et al.~\cite{Liu_2019_CVPR_21} transferred the visual knowledge of the majority to the minority classes by learning a set of dynamic element vectors. Lin et al. \cite{lin2017focal} proposed a new loss function, focal loss, to deal with the imbalance problem in object detection. 

As to the imbalance of graph data, Min et al. \cite{shi2020multi} conducted a pilot study for this problem. They proposed DR-GCN, adopting a conditional adversarial training together with distribution alignment to learn node representations. They only focus on the transductive setting. Therefore, methods are compared on the transductive setting in Section \ref{results}.

\section{Effect of imbalance for GNNs}
\subsection{Qualitative analysis. }
We first analyze the interaction between the majority and the minority classes in an imbalanced data set of a non-graph classification task from the perspective of training loss. The loss can be expressed as follows:
\begin{equation}
\mathcal{L}_{total}=\mathcal{L}_{majority}+\mathcal{L}_{minority},
\end{equation}
where $\mathcal{L}_{majority}$ is the loss of the majority classes and $\mathcal{L}_{minority}$ is the loss of the minority classes. Because the number of samples in the majority classes (majority samples for brevity) is much greater than that of samples in the minority classes (minority samples for brevity), the loss is composed mainly of majority samples, and the gradients of model optimization are determined more by majority samples. As a result, both the feature representation and the decision layers of the training model are optimized towards a high classification accuracy of the majority classes in each training epoch, which may lead to the overfitting of the majority classes and ignores the minority classes while training. Ultimately, the performance of the overall model is damaged.

In graph classification tasks, because the loss function can still be expressed in the above form, the training for GNNs will also be affected by the imbalance, that is, the majority samples determine the gradients in training. However, because the majority and the minority nodes in the graph usually have direct edge links, in addition to the negative effects of the loss of coupling on the minority classes, the nodes in the majority classes may affect the feature representation of the nodes in the minority classes directly through edges. When node heterophily holds, this kind of effect between neighbor nodes is very likely to be negative. 

Information exchange between nodes of the same category is beneficial, while information exchange between nodes of different categories is likely to be harmful in GNNs. Because there are fewer minority nodes, the proportion of nodes in minority classes with heterophily nodes in their neighborhoods may be relatively larger, which may have a greater negative effect. In other words, minority nodes are more likely to be misclassified.

% Comparing with non-graph data imbalance, the imbalance in graph data has a greater negative impact. The successive section defines a node-level index and conduct quantitative analyses further. 
To sum up, imbalance in non-graph data affects both the decision and the feature representation layers through the loss minimization; imbalance in graph data affects both the decision and the feature representation layers according to both the loss minimization and the direct information exchange between nodes. Therefore, imbalance in graph data can have a more serious impact on GNNs training.

\subsection{Quantitative analysis. }
Suppose a graph $G=(V,E)$ and let $x_i \in V$. The category distribution of the $K$-neighbor nodes of $x_i$ is $P_{\mathcal{N}_i}=\{p_{i,1}, \cdots, p_{i,c}, \cdots, p_{i,C}\}$, where $\mathcal{N}_i$ denotes the $K$-neighborhood and $p_{i,c}$ represents the proportion of the $c$th category in $\mathcal{N}_i$. We use 1-neighborhood to reduce the computation complexity in this study. Meanwhile, if $x_i$ belongs to the $c$th category, the distribution of itself is marked as a one-hot vector $P_{\mathcal{I}_i}=\{0, \cdots, 0, 1, 0,  \cdots, 0\}$, where the $c$th element is 1.

We define a node-level index, label difference index ($LDI$), on graph to better characterize the relationship between imbalance and misclassification.

\begin{definition}
\textbf{Label Difference Index (\textit{LDI})}: Given a node $x_i$, the $LDI$ of $x_i$ is
\begin{equation}
LDI_i=\frac{1}{\sqrt{2}}\|P_{\mathcal{N}_i}-P_{\mathcal{I}_i}\|_2.
\end{equation}
\end{definition}

The range of $LDI_i$ is $[0,1]$. The higher the $LDI$ value, the larger the difference between the two distributions. If $x_i$ is an isolated node, then we define its $LDI$ as the average $LDI$ of nodes in the same category. We have the following proposition.

\begin{proposition}\label{lemma1}
Given a node $x_i$ belonging to the $c$th category. When the proportion of homophily nodes in its neighborhood is fixed (i.e., $p_{i,{c}}$ is fixed), the less categories of heterophily nodes, the larger the $LDI$.
\end{proposition}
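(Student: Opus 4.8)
The plan is to expand the squared norm in the definition of $LDI_i$ coordinate by coordinate and then isolate the number of heterophily categories as the only free quantity. The difference vector $P_{\mathcal{N}_i}-P_{\mathcal{I}_i}$ has $c$th entry $p_{i,c}-1$ and $j$th entry $p_{i,j}$ for $j\neq c$, so
\[
2\,LDI_i^2 = \|P_{\mathcal{N}_i}-P_{\mathcal{I}_i}\|_2^2 = (1-p_{i,c})^2 + \sum_{j\neq c} p_{i,j}^2 .
\]
Since $p_{i,c}$ is held fixed by hypothesis, the term $(1-p_{i,c})^2$ is a constant, and maximizing $LDI_i$ reduces to maximizing the heterophily sum of squares $\sum_{j\neq c}p_{i,j}^2$ subject to the normalization $\sum_{j\neq c}p_{i,j}=1-p_{i,c}=:S$, which is itself fixed.

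First I would model ``fewer heterophily categories'' by letting $m$ be the number of nonzero entries among the $p_{i,j}$ with $j\neq c$ and, for a like-for-like comparison, assume the fixed mass $S$ is split evenly across those $m$ categories, i.e. $p_{i,j}=S/m$. Substituting gives $\sum_{j\neq c}p_{i,j}^2 = m\,(S/m)^2 = S^2/m$, so that
\[
LDI_i = \frac{1}{\sqrt{2}}\sqrt{S^2 + \frac{S^2}{m}} = \frac{S}{\sqrt{2}}\sqrt{1+\frac{1}{m}} .
\]
Because $m\mapsto 1+\tfrac{1}{m}$ is strictly decreasing, so is $LDI_i$, and the fewer the heterophily categories the larger the index — exactly the claim. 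The extreme case $m=1$ (all heterophily mass in a single foreign category) yields the largest possible value $\tfrac{S}{\sqrt2}\sqrt{2}=S$.

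The main obstacle I anticipate is justifying the even-split assumption, because for a general distribution over a fixed $m$ the quantity $\sum_{j\neq c}p_{i,j}^2$ can range from $S^2/m$ (uniform) up to nearly $S^2$ (nearly concentrated), so the attainable ranges for different $m$ overlap and the bare count of categories does not by itself pin down $LDI_i$. I would resolve this by framing the result as a comparison between the representative uniform configurations for each $m$, which isolates the effect of the category count; equivalently, one can state the claim in terms of the \emph{minimal} achievable heterophily sum of squares $S^2/m$ for each $m$, which is strictly decreasing in $m$ and makes the monotonicity rigorous without any distributional assumption.
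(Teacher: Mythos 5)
Your proposal is correct and starts from the same decomposition as the paper's proof: expand $2\,LDI_i^2=(1-p_{i,c})^2+\sum_{c'\neq c}p_{i,c'}^2$, observe that the first term is fixed, and study the heterophily sum of squares under the constraint $\sum_{c'\neq c}p_{i,c'}=\Delta=1-p_{i,c}$. Where you diverge is in how the monotonicity is extracted. The paper applies the Cauchy inequality
$\frac{(\sum_{c'\neq c}p_{i,c'})^2}{C-1}\leq\sum_{c'\neq c}p_{i,c'}^2\leq(\sum_{c'\neq c}p_{i,c'})^2$
to pin down only the two extremes --- the minimum at the uniform spread over all $C-1$ foreign categories and the maximum $LDI_i=\Delta$ when the heterophily mass sits in a single category --- and then restates the conclusion qualitatively as ``the more concentrated the categories of heterophily nodes, the larger the $LDI$.'' You instead parametrize by the number $m$ of occupied heterophily categories under an even split, obtaining the exact value $\frac{\Delta}{\sqrt 2}\sqrt{1+1/m}$, which is strictly decreasing in $m$ and interpolates between the paper's two endpoints (and is consistent with all six examples in Figure 1). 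This buys you a proof of the proposition as literally stated (monotonicity in the category count), not just its boundary cases, and your explicit acknowledgement that the attainable ranges of $\sum p^2$ for different $m$ overlap --- so the claim only holds for the uniform representatives or for the minimal achievable value $\Delta^2/m$ --- is a caveat the paper silently sidesteps by softening its conclusion to a statement about concentration. As a side benefit, your closed form avoids a small arithmetic slip in the paper, which reports the uniform-case sum of squares as $\Delta^2/(C-1)^2$ rather than the correct $\Delta^2/(C-1)$, and consequently states a slightly wrong lower bound; your formula at $m=C-1$ gives the right one, $\Delta\sqrt{C/(2(C-1))}$.
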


The proof is shown in the appendix. Figure \ref{fig:fig2} shows an illustrative example. The colors denote the categories. Assume that $P_{\mathcal{I}_5}=\{1,0,0,0\}$. In Figure 1(a), $P_{\mathcal{N}_5}=\{1,0,0,0\}$, then $LDI_5=0$. In Figure 1(b), $P_{\mathcal{N}_5}=\{\frac{3}{4},\frac{1}{4},0,0\}$, then $LDI_5=\frac{1}{4}$. In Figure 1(c), $P_{\mathcal{N}_5}=\{\frac{1}{4},\frac{1}{4},\frac{1}{4},\frac{1}{4}\}$, then $LDI_5=\frac{\sqrt{6}}{4}$. In Figure 1(d), $P_{\mathcal{N}_5}=\{0,\frac{1}{2},\frac{1}{4},\frac{1}{4}\}$, then $LDI_5=\frac{\sqrt{11}}{4}$. In Figure 1(e), $P_{\mathcal{N}_5}=\{0,\frac{1}{2},\frac{1}{2},0\}$, then $LDI_5=\frac{\sqrt{3}}{2}$. In Figure 1(f), $P_{\mathcal{N}_5}=\{0,1,0,0\}$, then $LDI_5=1$. In other words, node 5 in Figure 1(f) is the most easiest to be misclassified. The six $LDI$ values in Figure~\ref{fig:fig2} indicate that our $LDI$ definition is reasonable and Proposition \ref{lemma1} holds. 

\begin{figure}[]
    \centering
    \includegraphics[width=\linewidth]{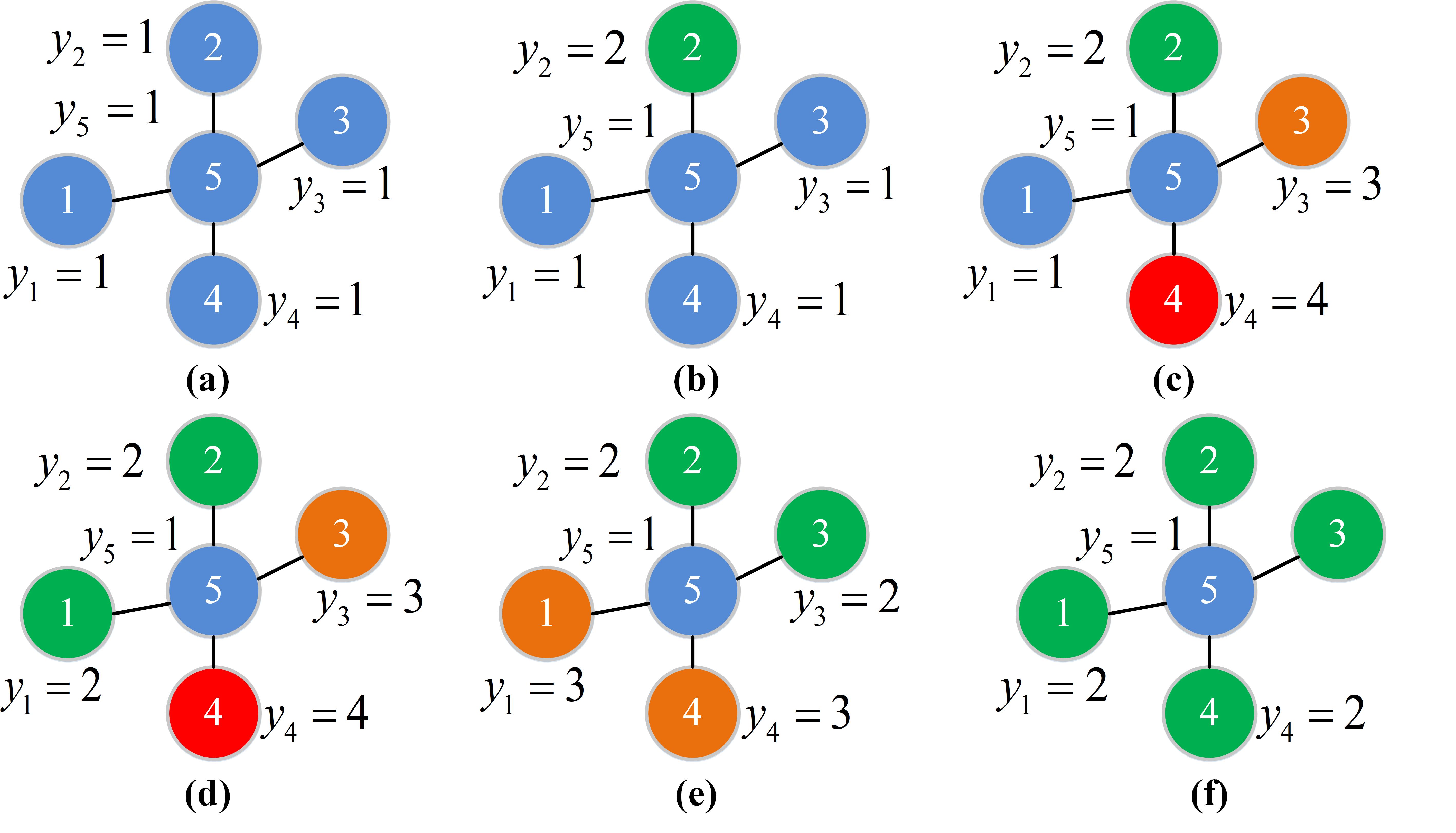}
    \vspace{-0.2in}
    \caption{Examples of nodes with different neighborhoods. The $LDI$ values of node 5 in (a)-(f) are $0$, $\frac{1}{4}$, $\frac{\sqrt{6}}{4}$, $\frac{\sqrt{11}}{4}$, $\frac{\sqrt{3}}{2}$, $1$, respectively.}
    \vspace{-0.25in}
    \label{fig:fig2}
% \vspace{-0.3in}
\end{figure}

$LDI$ is a node-level index that can be easily applied to concrete GNN algorithms. Different from the homophily index mentioned in \cite{pei2020geom},  $LDI$ utilizes the distribution of neighbors, and is more fine-grained. Proposition \ref{lemma1} verifies this. The indices of node 5 in Figures \ref{fig:fig2} (d), (e), and (f) are the same (equal to 0) if the homophily index in \cite{pei2020geom} is used. However, $LDI$ can distinguish these three cases better.

Based on $LDI$, we conduct analyses in terms of the differences among categories, the differences among samples within specific categories, the relationship between $LDI$ and accuracies, and the relationship between $LDI$ and misclassification under different numbers of layers ($\#layers$).

\textbf{Average \textit{LDI}s of different categories.} We calculate the average $LDI$s of different categories on five benchmark graph data sets, as shown in Figure~\ref{fig:fig3}. Although the trend of the category distribution (Figure~\ref{fig:fig3} (up)) is not completely the opposite of the trend of the LDI distribution (Figure~\ref{fig:fig3} (middle)) on each data set, the overall trend is that the $LDI$s of minority classes are large, indicating that the proportions of heterophily nodes around the nodes in the minority classes are relatively high. In Figure \ref{fig:fig3} (down), the categories with large average $LDI$s usually have low accuracies.

\begin{figure*}[h]
    \centering
    \includegraphics[width=\linewidth]{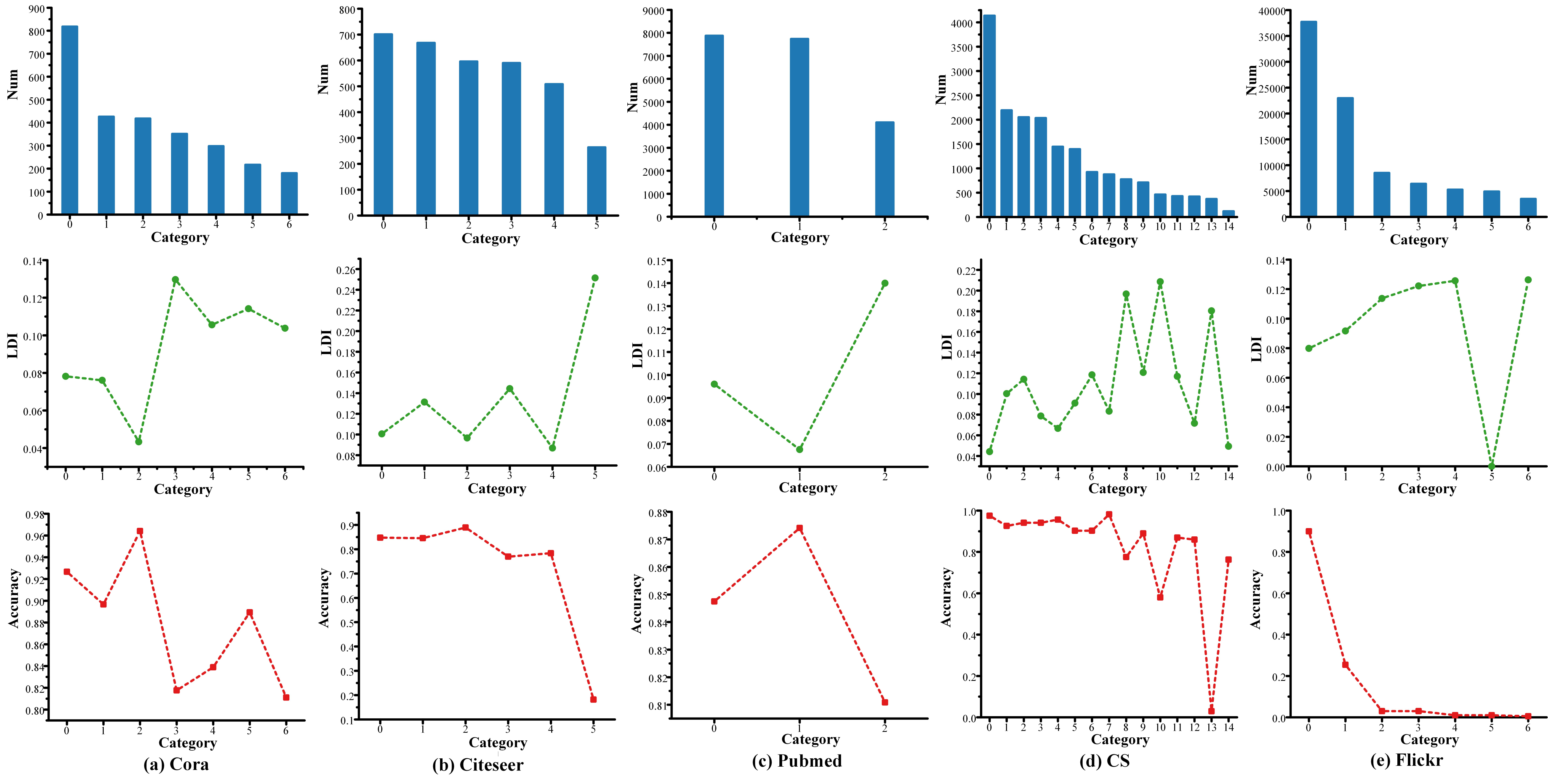}
    \vspace{-0.2in}
    \caption{Category distributions (up), average $LDI$s (middle), and average accuracies (down)  of the five different data sets used in this study. Categories with small average $LDI$s always have high accuracies. On Flickr, Category 5 has a small average $LDI$ but a small average accuracy. This is mainly because loss minimization and feature exchange between nodes negatively affect the accuracy of minority classes, and loss minimization plays a main role.}
    \label{fig:fig3}
\vspace{-0.15in}
\end{figure*}

\textbf{\textit{LDI} distributions within specific categories.} Within a category, the $LDI$s of different samples are also distinct. We arbitrarily select a majority class and a minority class from the graph data set Citeseer~\cite{shi2020multi}. As shown in Figure~\ref{fig:fig4}, samples in the majority class are more concentrated in the small $LDI$ intervals. Contrarily, samples in the minority class not only concentrate in the large $LDI$ intervals. That is, samples with large $LDI$s can be found in the majority classes and samples with small $LDI$s can be found in the minority classes. 
% The average accuracies of different $LDI$ intervals within a specific category show that samples in the majority classes can also be more likely to be misclassified. Thus establishing a reasonable node-level index to supplement the existing label-level index is necessary.

\begin{figure}[]
\centering
\includegraphics[width=1.02\linewidth]{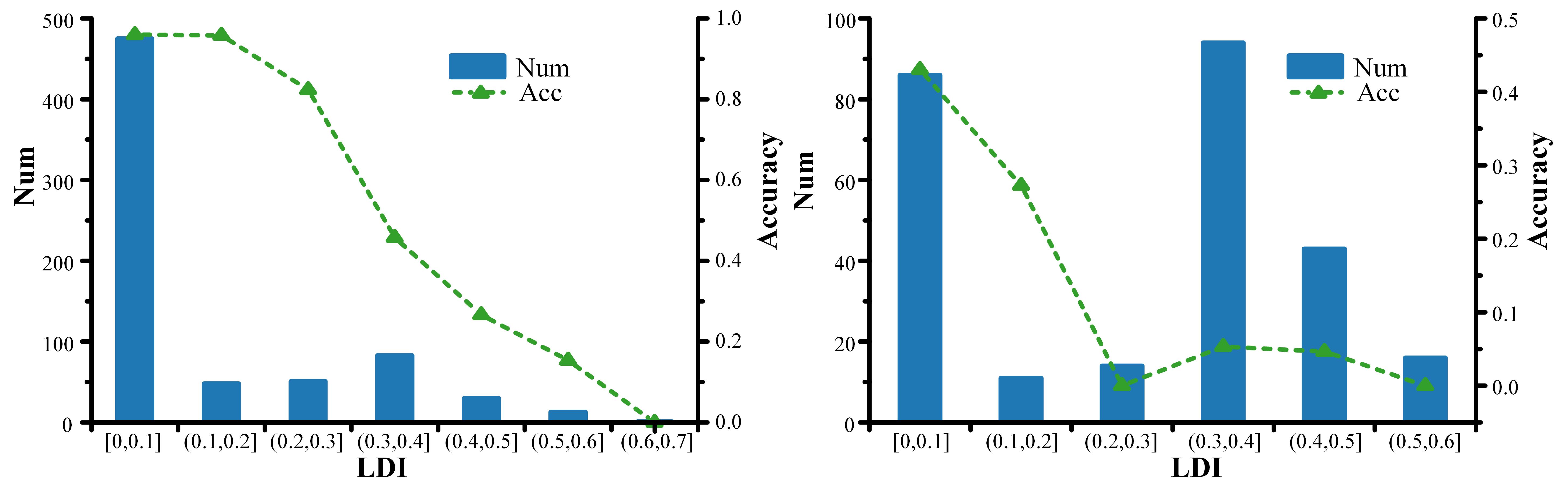}
\vspace{-0.3in}
\caption{Sample distributions and average accuracies for different $LDI$ intervals of the majority (left) and the minority class (right) of Citeseer. In the majority class, samples concentrate in small $LDI$ intervals. In the minority class, samples disperse more uniformly.}
\vspace{-0.15in}
\label{fig:fig4}
\end{figure}

\textbf{Average accuracies of different \textit{LDI} intervals.} Furthermore, we analyze the average accuracies of nodes in different $LDI$ intervals. As shown in Figure \ref{fig:fig5}, nodes concentrate in small $LDI$ intervals, and the classification accuracy decreases as the $LDI$ increases on different data sets. Nodes with large $LDI$s are more likely to exchange information with heterophily nodes in their neighborhoods, resulting in low accuracies. As shown in Figure \ref{fig:fig4}, within a specific category, the average accuracies of different $LDI$ intervals show that samples with large $LDI$ values in the majority classes can also be more likely to be misclassified. Thus establishing a reasonable node-level index to supplement the existing category-level index is necessary.

\begin{figure*}[ht]
    \centering
    \includegraphics[width=0.8\linewidth]{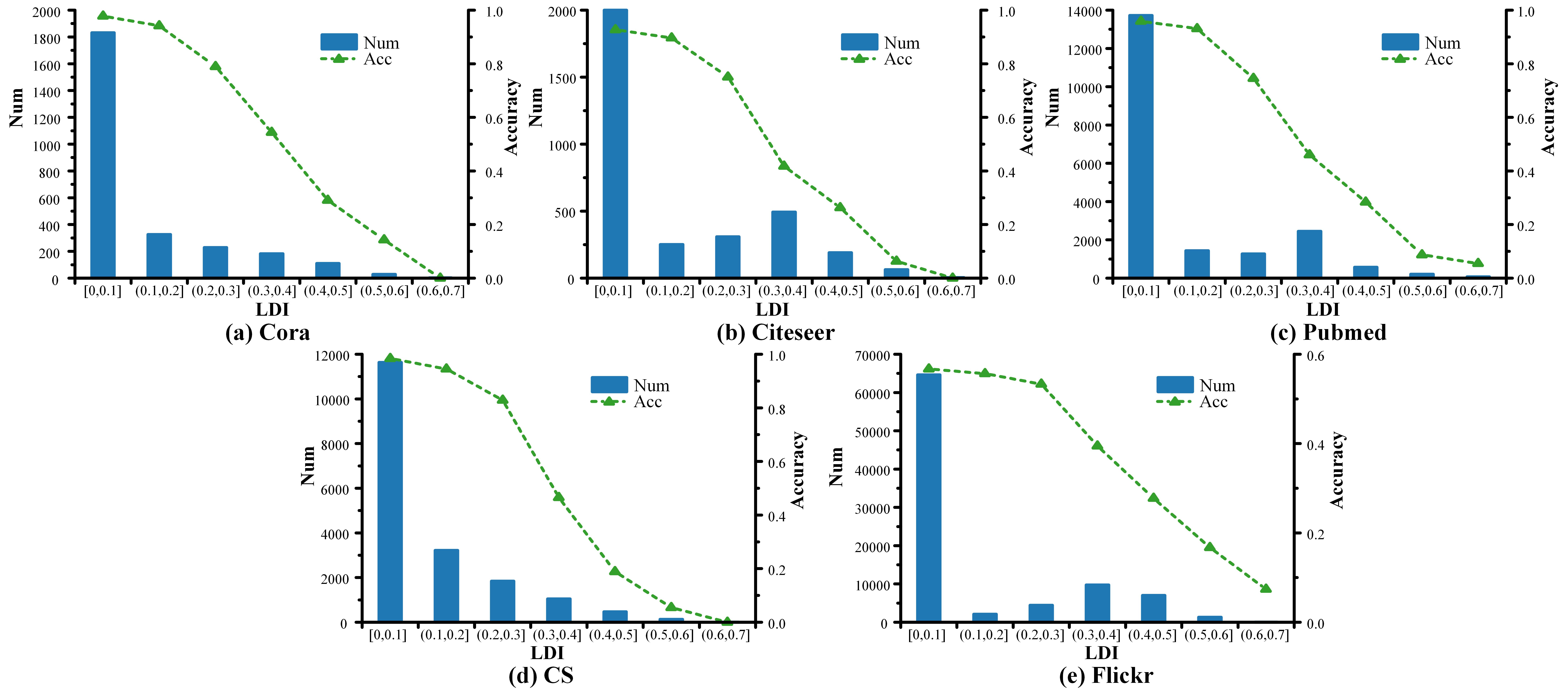}
    \vspace{-0.15in}
    \caption{For all data sets, nodes concentrate in small $LDI$ intervals. Nodes with large $LDI$s have low accuracies.}
    \label{fig:fig5}
    \vspace{-0.15in}
\end{figure*}

\textbf{Relationship between \textit{LDI} and misclassification under different $\textbf{\#layers}$.} The performance of GNNs decreases with the increase of layers (e.g., $\#layers>4$), which is mainly caused by oversmoothing. We explore the relationship between $LDI$ and the misclassification caused by oversmoothing. Let the correctly predicted sample set be $R_n$ when the number of layers equals to $n$. We calculate the ratio of the average $LDI$ of the newly wrongly predicted samples (i.e., $\overline{R_n}\cap R_{n-1}$) when the layers increase by one to the average $LDI$ of the correctly predicted samples before the layer increases, that is,
\begin{equation}\label{1}
% \vspace{-0.1in}
r_n=\frac{LDI_{avg}(\overline{R_n}\cap R_{n-1})}{LDI_{avg}(R_{n-1})},
\end{equation}
where $LDI_{avg}(\Omega)$ represents the average $LDI$ of set $\Omega$.

In Figure~\ref{fig:fig6}, as the $\#layers$ increases, the value of $r_n$ is greater than 1 in nearly all cases. It shows that as the number of layers increases, samples with large $LDI$s are more likely to be misclassified. More results are shown in the appendix.

\begin{figure}[]
% \vspace{-0.1in}
    \centering
    \includegraphics[width=1\linewidth]{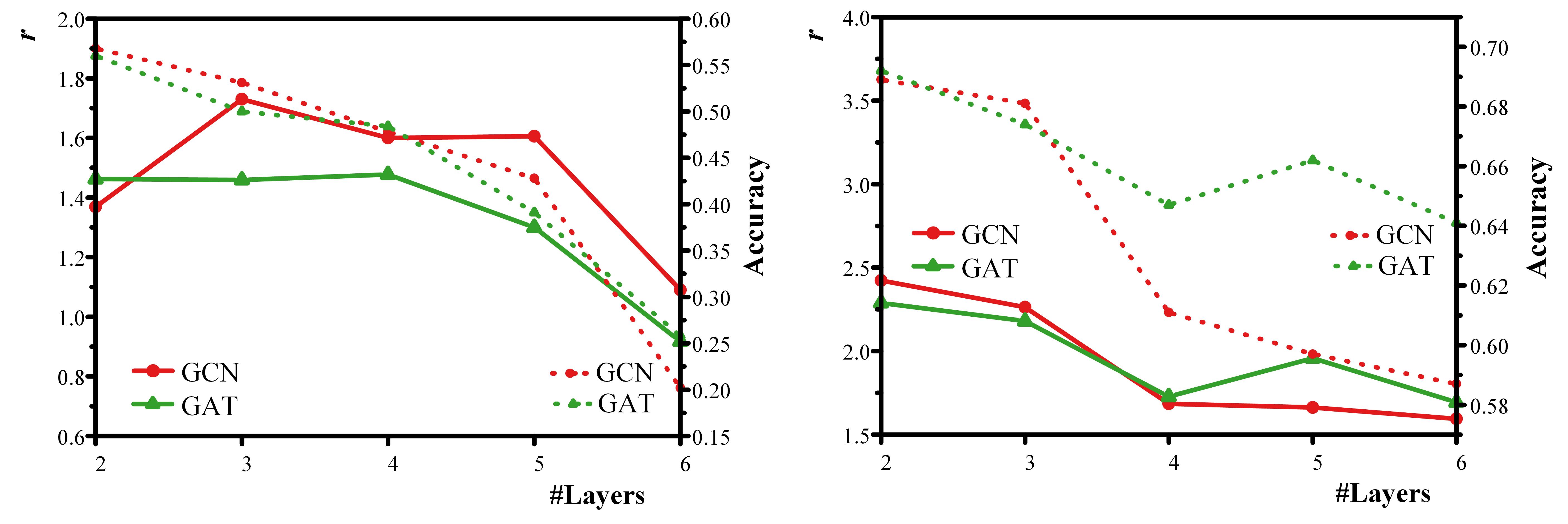}
    \vspace{-0.2in}
    \caption{Variations in $r$ (solid lines) and accuracy (dot lines) with increasing \#layers using GCN and GAT under insductive (left) and transductive (right) settings on the Citeseer data set.}
    % \vspace{-0.1in}
    \label{fig:fig6}
\vspace{-0.15in}
\end{figure}

To sum up, the category distribution of neighbor nodes that $LDI$ relies on is related closely to the global category distribution as shown in Figure~\ref{fig:fig3}. $LDI$ reflects the possibility of each node being misclassified as shown in Figure~\ref{fig:fig5}. Therefore, on the basis of the global distribution and $LDI$, we proposed several new methods to tackle the imbalance for GNNs in the succeeding section.

% \vspace{-0.1in}
\section{Methodology}
A new loss and four new methods, namely, iFL, GRS, GRW, GML, and GBBN, are proposed in this section.

\vspace{-0.15in}
\subsection{Improved Focal Loss (iFL). }
Focal loss is designed to solve the imbalance problem in object classification. For binary classification, the focal loss can be defined as
\begin{equation}\label{FL}
\mathcal{L}_{FL} = -\sum_i\alpha_{y_i}(1-p_{i,y_i})^{\gamma}\text{log}(p_{i,y_i}),
\end{equation}
where $p_{i,y_i}$ is the probability that node $i$ is correctly classified; $\alpha_{y_i}$ and ${\gamma}$ are hyperparameters. For binary classification, the value of $\alpha_{y_i}$ is easy to set. Nevertheless, the number of categories for GNN benchmark data sets is ususally large, resulting in that it is difficult to set $\alpha_{y_i}$ properly. Therefore, to avoid the manually setting for $\alpha_{y_i}$, an improved focal loss is defined as follows:
\begin{equation}\label{fl_ours}
\mathcal{L}_{iFL} = -\sum_i(1-\overline{p}_{y_i})^{\gamma}\text{log}(p_{i,y_i}),
% \vspace{-0.2in}
\end{equation}
where $\overline{p}_{y_i}$ is the average probability that nodes in the category $y_i$ are correctly classified. For categories with low classification accuracy, the value of $(1-\overline{p}_{y_i})^{\gamma}$ will be large.

% \vspace{-0.15in}
\subsection{Graph Re-sampling (GRS) and Graph Re-weighting (GRW). } \label{4.1}

Re-sampling/re-weighting essentially resamples/reweights the the samples during network training to strengthen the learning of the minority classes.

Our proposed GRS and GRW are based on re-sampling and re-weighting, respectively. In this study, re-sampling and re-weighting are performed according to the numbers of samples in different categories, and give more weights to the minority classes with smaller sample sizes. $LDI$ is also utilized. Suppose the category of $x_i$ is $c$. $N_{c}$ is the number of samples of category $c$. $N$ is the total number of samples of the data set. For $x_i$, there are three weighting strategies:
\begin{description}
\item[W1:] Label-based weighting:
    \begin{equation}
    \vspace{-0.1in}
    w^{L}_{i}=\frac{N}{N_{c}}.
    % \vspace{-0.15in}
    \end{equation}
\item[W2:] $LDI$-based weighting\footnote{Because $LDI$s of some samples are zero, to prevent the weights from being zero, $e^{LDI_i}$ is used as the weight.}:
    \begin{equation}
    \vspace{-0.05in}
    w^{D}_{i}=e^{LDI_i}.
    \vspace{-0.05in}
    \end{equation}
\item[W3:] Combination of label and $LDI$:
    \begin{equation}
    \vspace{-0.05in}
    w^{LD}_{i}=w^{L}_{i}\cdot w^{D}_{i}.
    \vspace{-0.05in}
    \end{equation}
\end{description}

\subsection{Graph Metric Learning (GML). }

Metric learning aims to learn a feature space in which the distances between homophily samples are close and the distances between heterophily samples are far. Dong et al.~\cite{8353718_7} applied a triplet loss-based metric learning to deal with the imbalance problem in image classification.

Triplet loss is used in this study. The key to triplet loss-based metric learning is to select the appropriate triplets as training samples. When choosing a triplet, the anchor point is first selected; the positive and negative sample pairs corresponding to it are then selected from other samples. According to the probability of being easy to be classified correctly, anchor points are divided into easy ones and hard ones. Because the hard triplets are not conducive to the learning of the model, Wang et al.~\cite{Wang_2019_ICCV_8} proposed that only choosing the easy anchors would obtain better performance. 

In this study, $LDI$ is used to measure the hardness of a sample. The top 10\% samples with the highest LDI values are removed (i.e. the top 10\% hardest samples are removed). For the remaining samples, the larger the $LDI$ of a sample, the more likely it is to be selected. For each anchor, in its $K$-neighborhood, the negative samples are selected according to the closeness of distances. Outside its $K$-neighborhood, the positive samples are selected according to the further distances. The candidates are shown in Figure~\ref{fig:fig7}. Let the t$th$ triplet be $(x_t,x_{t+},x_{t-})$, its score can be calculated by
\begin{equation}
% \vspace{-0.1in}
score_t=max(0,m+d(x_t,x_{t+})-d(x_t,x_{t-})),
% \vspace{-0.1in}
\end{equation}
where $m$ refers to margin, and $d(\cdot)$ represents the distance between two feature vectors. Assume that $T$ is the set of selected triplets with relatively high scores. The triplet loss can be represented as
\begin{equation}
\vspace{-0.05in}
\mathcal{L}_{ML}=\frac{1}{|T|}\sum_{t} score_t.
% \vspace{-0.05in}
\end{equation}
The triplet loss and the focal loss form the final loss function. Following the setting of Zhou et al.~\cite{Zhou_2020_CVPR_9}, we also adopt a cumulative learning strategy to combine the above two losses: 
\begin{equation}
% \vspace{-0.1in}
\mathcal{L}_{GML}=f(e)\mathcal{L}_{ML}+\mathcal{L}_{iFL},
% \vspace{-0.05in}
\end{equation}
where $f(e)$ refers to the weight in the current epoch.

\begin{figure}[h]
% \vspace{-0.1in}
    \centering
    \includegraphics[width=0.45\linewidth]{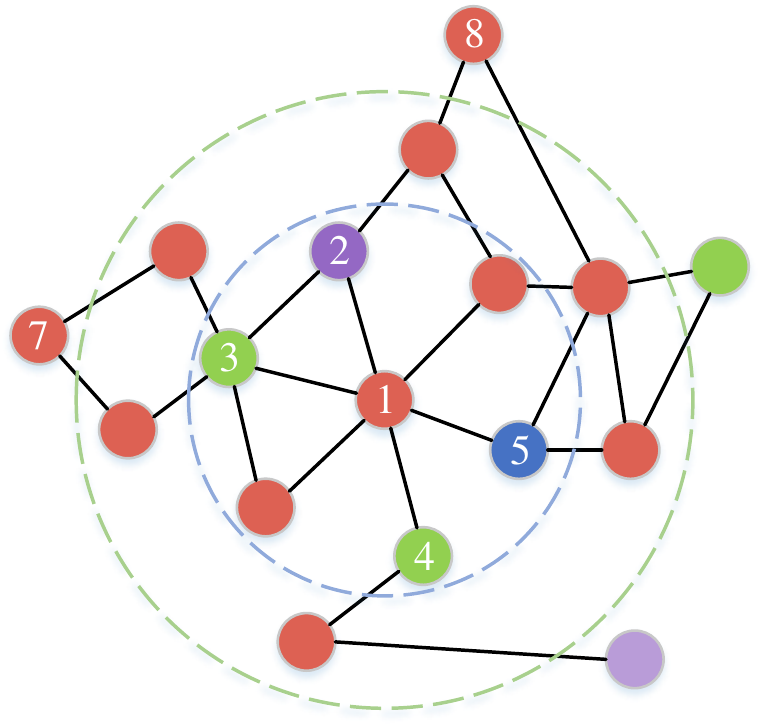}
    \vspace{-0.1in}
    \caption{Node 1 is the selected anchor. Let $K=2$, nodes 2, 3, 4, and 5 represent negative candidate nodes, and the outermost two red nodes 7 and 8 are positive candidate nodes.}
    \label{fig:fig7}
\vspace{-0.2in}
\end{figure}

\subsection{Graph Bilateral-Branch Network (GBBN).}

Our GBBN is based on the bilateral-branch network proposed by Zhou et al.~\cite{Zhou_2020_CVPR_9} to solve the imbalance in visual recognition tasks. The network consists of two branches: the conventional learning branch and the re-balancing branch. The conventional branch samples uniformly from the original data, maintaining the original data distribution for feature learning. The re-balancing branch is a reverse sampling, which aims to increase the probability of minority classes and reduce data imbalance~\cite{DBLP:journals/corr/abs-1709-01450_10}.

Figure~\ref{fig:fig8} shows the structure of GBBN. The conventional (upper) branch uses conventional models such as GCN and GAT. The re-balancing (lower) branch no longer adopts the same structure as the upper branch, but uses DropEdge GNN~\cite{rong2020dropedge_14} to reduce information exchange between nodes. In the experiments of this study, we drop all edges to reduce the training complexity. In addition, the proposed iFL is used rather than the standard cross entropy loss. Comparing with GNN-XML proposed in \cite{zong2020gnn} which aims to overcome the imbalance for text classification, the settings of two branches and the loss functions are different. 

\begin{figure*}[h]
    \centering
    \includegraphics[width=0.75\linewidth]{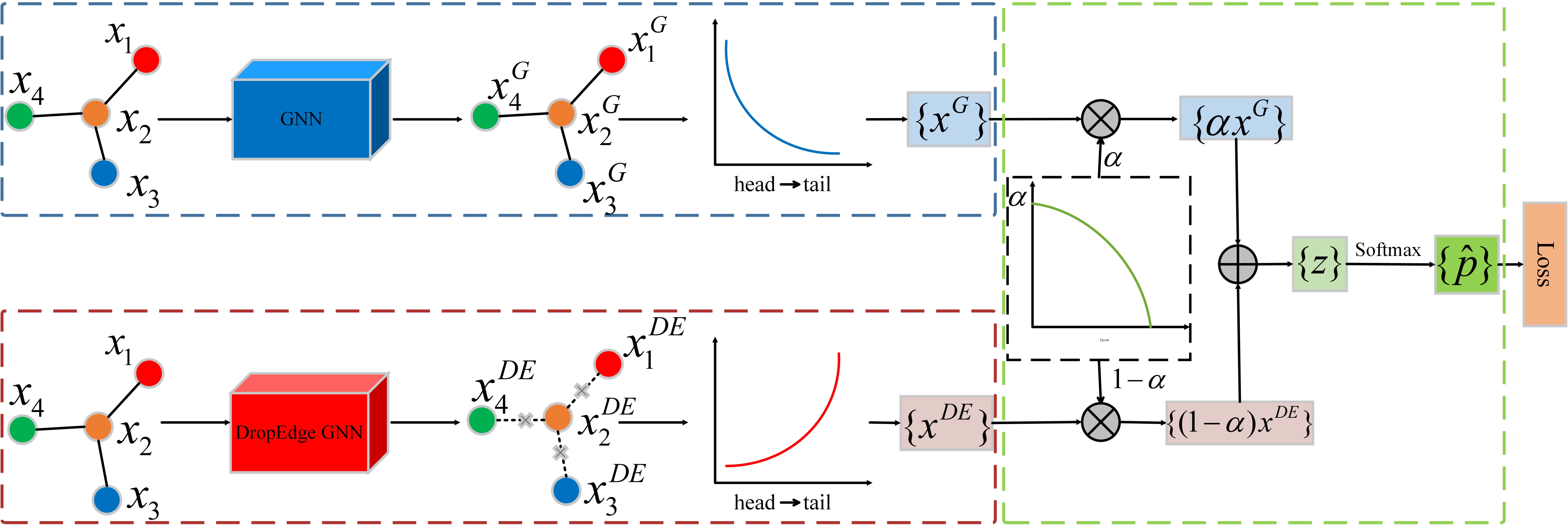}
    \vspace{-0.1in}
    \caption{The network structure of GBBN.}
    \label{fig:fig8}
\vspace{-0.1in}
\end{figure*}

The upper branch learns the representations of the nodes according to uniform sampling. The sampled nodes also present an imbalanced distribution. The reverse sampling of the lower branch takes the label and $LDI$ of a node into account. For samples with higher $LDI$s, the sampling probabilities are greater, which not only balances the data from the category-level, but also focuses on the samples that are easily misclassified from the node-level. For this branch, this study leverages the three weighting strategies defined in Section~\ref{4.1}, namely, W1, W2, and W3.

The graph is entered into the two branches. Two logit outputs of each sample under GNN and DropEdge GNN are obtained and denoted as $x^G$ and $x^{DE}$, respectively. Then, a weighted logit vector is obtained by controlling the adaptive weight parameter $\alpha$. The final output logit vector can be obtained by using the following formula: 
\begin{equation}
% \vspace{-0.1in}
z=\alpha x^G+(1-\alpha)x^{DE},
% \vspace{-0.05in}
\end{equation}
where $z\in \mathbb{R}^C$ refers to the final logic vector. The prediction is $\hat{p}=\text{Softmax}(z)$.

The classification loss of GBBN can be expressed as follows:
\begin{equation}
% \vspace{-0.1in}
\mathcal{L}_{GBBN}=\alpha \mathcal{L}_{iFL}(\hat{p},y^G)+(1-\alpha)\mathcal{L}_{iFL}(\hat{p},y^{DE}).
% \vspace{-0.05in}
\end{equation}

The network training also adopts the cumulative learning strategy. By changing the parameter $\alpha$, the output logit and the loss also vary, resulting in that different training stages have different learning emphases. In the early stage, the emphasis is on the upper branch. Then, the emphasis gradually shots to the minority categories and the lower branch. This is reasonable. With the increase of training epoch, minority nodes (especially nodes with large $LDI$s) should rely on their own representations more than neighborhoods because their neighborhoods contain more heterophily nodes than others. In our experiments, we use linear decay, i.e. $\alpha =1-e/Epoch$, where $e$ is the current epoch and $Epoch$ is the expected max epoch.

\section{Experiment}

We design extensive experiments to evaluate the effectiveness of the four proposed methods. Both transductive and inductive settings are tested.

% \vspace{-0.1in}
\subsection{Settings. }
\textbf{Data sets.} Five\footnote{The rest three data sets Reddit~\cite{zeng2020graphsaint_24}, Amazon-Computer~\cite{zeng2020graphsaint_24}, and Amazon-Photo~\cite{zeng2020graphsaint_24} shown in the appendix are relatively large. A subgraph sampling-based strategy (e.g., GraphSAINT~\cite{zeng2020graphsaint_24}) should be employed. We leave it as our future work.} benchmark data sets in Figure \ref{fig:fig3} are used: Cora~\cite{shi2020multi}, Citeseer, Pubmed~\cite{zhao2020pairnorm_15}, CoauthorCS (CS)~\cite{zhao2020pairnorm_15}, and Flickr~\cite{zeng2020graphsaint_24}. For Cora and Citeseer, the train, validation, and test division ratio is 2:4:4; for others, the division ratio is 0.5:4.5:5. Previous transductive studies on the above data sets do not shuffle training data in their experiments. Data sets with and without shuffle are considered in the experiments. When data shuffle is adopted, the average results of five randomly shuffle are recorded. The data statistics are summarized in the appendix. All five graphs are class-imbalanced as shown in Figure~\ref{fig:fig3}. 

\textbf{Hyperparameters.} For GCN, SGC, and GAT, the \#hidden units of Cora, Citeseer, and Pubmed, dropout rate, and $l2$ regularization penalty settings are the same as~\cite{zhao2020pairnorm_15}. For Flickr, the \#hidden unit is set the same as~\cite{zeng2020graphsaint_24}. The max epoch is set to 1500. For GML, $K$ is set to 2, the margin is set to 0.2, $\epsilon$ is set to 0.001, and $\rho$ is set to 0.3. For GBBN, $\alpha$ is set to 0.5 in the validation and testing phase. In the comparison with DG-GCN, the settings in \cite{shi2020multi} are followed. 

% \vspace{-0.1in}
\subsection{Results. }\label{results}

\textbf{Comparison with DR-GCN.} DR-GCN \cite{shi2020multi} only focuses on the transductive setting. Therefore, only the weighting strategy W1 is used in GRS, GRW, and the lower branch of GBBN. The reason why W2 and W3 are not used is because the input data under transductive setting is the entire graph, when calculating the $LDI$s of the training set, the labels of the validation set and the test set will be referred to, resulting in label leak in the validation and test sets. The CS data set is not involved because this data set is not used in the DR-GCN study. The results in Table \ref{tab:tab2} show that our methods are better than DR-GCN on Cora, Citeseer, and Pubmed data sets. The accuracies of DR-GCN are directly from \cite{shi2020multi}.

\begin{table}[]
% \vspace{-0.05in}
\caption{Accuracy of the competing methods.}
\vspace{-0.15in}
\label{tab:tab2}
\begin{tabular}{l|lll}
\bottomrule
Data set                  & Cora               & Citeseer       & Pubmed          \\\hline
DR-GCN~\cite{shi2020multi}& 0.741              & 0.677          & 0.817           \\\hline
GRS(GCN, W1)              & 0.745              & 0.685          & 0.820           \\
GRW(GCN, W1)              & \textbf{0.780}     & 0.657          & 0.817           \\
GBBN(GCN, W1)             & 0.756              & \textbf{0.703} & \textbf{0.827}  \\\bottomrule
\end{tabular}
\vspace{-0.2in}
\end{table}

\textbf{Results on the transductive setting with shuffle.}  For the same reason for why W2 and W3 are not used in the transductive setting, GML is also not utilized in this experiment. The accuracy, G-mean, and macro F1-score are used as the evaluation metrics. The results are shown in Tables~\ref{tab:tab3} and \ref{tab:tab3.1}. More results are shown in the appendix.

\begin{table*}[]
\centering
\caption{Accuracy and G-mean on the transductive setting with shuffle.}
\vspace{-0.1in}
\label{tab:tab3}
\begin{tabular}{l|ll|ll|ll|ll}
\bottomrule
Data set  & \multicolumn{2}{c|}{Cora}       & \multicolumn{2}{c|}{Citeseer}   & \multicolumn{2}{c|}{Pubmed}     & \multicolumn{2}{c}{CS}      \\ \hline
         & Acc            & G-mean         & Acc            & G-mean         & Acc            & G-mean         & Acc            & G-mean         \\ \hline
GCN       & 0.339          & 0.449          & 0.354          & 0.523          & 0.580           & 0.658          & 0.370           & 0.456          \\
GRS(GCN,W1)  & 0.343          & 0.420           & 0.357          & 0.527          & 0.584          & 0.661          & 0.366          & 0.450           \\ 
GRW(GCN,W1)  & 0.347          & 0.483          & 0.393          & 0.567          & 0.581          & 0.667          & 0.199          & 0.382          \\
GBBN(GCN,W1) & 0.586          & 0.710           & 0.581          & 0.714          & 0.783          & 0.833          & 0.798          & 0.811          \\ \hline
SGC       & 0.327          & 0.405          & 0.350           & 0.519          & 0.571          & 0.638          & 0.238          & 0.265          \\
GRS(SGC,W1)  & 0.331          & 0.415          & 0.351          & 0.520           & 0.571          & 0.638          & 0.239          & 0.266          \\
GRW(SGC,W1)  & 0.149          & 0.367          & 0.202          & 0.373          & 0.549          & 0.680           & 0.027          & 0.307          \\
GBBN(SGC,W1) & \textbf{0.669} & \textbf{0.747} & \textbf{0.676} & \textbf{0.764} & \textbf{0.838} & \textbf{0.876} & \textbf{0.888} & \textbf{0.889} \\ \hline
GAT       & 0.337          & 0.422          & 0.360           & 0.530           & 0.560           & 0.618          & 0.254          & 0.335          \\
GRS(GAT,W1)  & 0.338          & 0.414          & 0.352          & 0.522          & 0.568          & 0.623          & 0.250           & 0.320           \\
GRW(GAT,W1)  & 0.307          & 0.472          & 0.375          & 0.558          & 0.580           & 0.666          & 0.206          & 0.370           \\
GBBN(GAT,W1) & 0.553          & 0.663          & 0.575          & 0.693          & 0.804          & 0.845          & 0.782          & 0.804  \\ \bottomrule
\end{tabular}
% \vspace{-0.1in}
\end{table*}

The following observations can be obtained: (1) Overall, GRS, GRW, and GBBN performed better as compared to the original models. (2) GBBN performed better than GRS and GRW overall. (3) The effectiveness of these three methods indicates that our label-based weighting strategy (W1) is effective.
\begin{table*}[h]
\begin{minipage}{\columnwidth}
\renewcommand{\arraystretch}{1.3}
\caption{Macro F1-score on the transductive setting with shuffle.}
\vspace{-0.1in}
\label{tab:tab3.1}
\centering
\setlength{\tabcolsep}{0.6mm}{
\begin{tabular}{l|llll}
\bottomrule
Data set   & Cora           & Citeseer       & Pubmed         & CS             \\ \hline
GCN       & 0.223          & 0.307          & 0.561          & 0.227          \\
GRS(GCN,W1)  & 0.175          & 0.314          & 0.566          & 0.223          \\
GRW(GCN,W1)  & 0.262          & 0.366          & 0.572          & 0.068          \\
GBBN(GCN,W1) & 0.554          & 0.549          & 0.783          & 0.689          \\ \hline
SGC       & 0.161          & 0.303          & 0.536          & 0.041          \\
GRS(SGC,W1)  & 0.176          & 0.304          & 0.536          & 0.041          \\
GRW(SGC,W1)  & 0.065          & 0.056          & 0.547          & 0.006          \\
GBBN(SGC,W1) & \textbf{0.623} & \textbf{0.626} & \textbf{0.838} & \textbf{0.826} \\ \hline
GAT       & 0.180           & 0.311          & 0.499          & 0.118          \\
GRS(GAT,W1)  & 0.168          & 0.307          & 0.502          & 0.106          \\
GRW(GAT,W1)  & 0.249          & 0.350           & 0.570           & 0.112          \\
GBBN(GAT,W1) & 0.492          & 0.519          & 0.803          & 0.680         \\ \bottomrule
\end{tabular}}
\end{minipage}
\begin{minipage}{\columnwidth}
\renewcommand{\arraystretch}{1.3}
\caption{Macro F1-score on the inductive setting based on SGC with shuffle.}
\vspace{-0.1in}
\label{tab:tab4.1}
\centering
\setlength{\tabcolsep}{0.6mm}{
\begin{tabular}{l|lllll}
\bottomrule
Data set  & Cora           & Citeseer       & Pubmed         & CS             & Flickr         \\ \hline
SGC      & 0.329          & 0.439          & 0.606          & 0.164          & 0.124          \\
GML      & 0.330           & 0.436          & 0.607          & 0.166          & 0.124          \\ \hline
GRS(W1)  & 0.439          & 0.495          & 0.631          & 0.327          & 0.140           \\
GRS(W2)  & 0.321          & 0.478          & 0.608          & 0.154          & 0.126          \\
GRS(W3)  & 0.447          & 0.498          & 0.633          & 0.358          & 0.138 \\ \hline
GRW(W1)  & 0.458          & 0.508          & 0.640           & 0.403          & 0.118          \\
GRW(W2)  & 0.183          & 0.202          & 0.499          & 0.072          & 0.091          \\
GRW(W3)  & 0.455          & 0.459          & 0.642          & 0.382          & 0.103          \\ \hline
GBBN(W1) & 0.637          & 0.639          & 0.837          & 0.849          & 0.140           \\
GBBN(W2) & 0.631          & 0.626          & 0.836          & 0.811          & 0.142          \\
GBBN(W3) & \textbf{0.662} & \textbf{0.644} & \textbf{0.838} & \textbf{0.853} & \textbf{0.197}        \\ \bottomrule
\end{tabular}}
\end{minipage}
\vspace{-0.1in}
\end{table*}

\textbf{Results on the inductive setting with shuffle.} In this part, we compare the effect of the model considering whether the $LDI$ is used in the inductive setting. Accordingly, all the weighting strategies (W1, W2, and W3) are leveraged. The macro F1-score and accuracy/G-mean based on SGC with shuffle are shown in Tables~\ref{tab:tab4.1} and \ref{tab:tab4}, respectively. When without shuffle, the results are shown in the appendix. The results on the inductive setting based on GCN and GAT are shown in the appendix.
% , \ref{tab:tab5} and \ref{tab:tab6}, respectively. 

\begin{table*}[]
% \centering
\caption{Accuracy and G-mean on the inductive setting based on SGC with shuffle.}
\vspace{-0.15in}
\label{tab:tab4}
\begin{tabular}{l|ll|ll|ll|ll|ll}
\bottomrule
Data set  & \multicolumn{2}{c|}{Cora}       & \multicolumn{2}{c|}{Citeseer}   & \multicolumn{2}{c|}{Pubmed}     & \multicolumn{2}{c|}{CS}         & \multicolumn{2}{c}{Flickr}      \\ \hline
          & Acc           & G-mean         & Acc            & G-mean         & Acc            & G-mean         & Acc            & G-mean         & Acc            & G-mean         \\ \hline
SGC      & 0.450           & 0.529          & 0.504          & 0.633          & 0.642          & 0.692          & 0.346         & 0.386          & 0.440           & 0.372          \\
GML      & 0.449          & 0.530           & 0.503          & 0.633          & 0.643          & 0.693          & 0.347         & 0.387          & 0.440           & 0.372          \\ \hline
GRS(W1)  & 0.482          & 0.643          & 0.529          & 0.670           & 0.648          & 0.710           & 0.469        & 0.530          & 0.338          & 0.385          \\
GRS(W2)  & 0.454          & 0.528          & 0.527          & 0.658          & 0.643          & 0.694          & 0.334          & 0.375          & 0.440           & 0.373          \\
GRS(W3)  & 0.502          & 0.635          & 0.527          & 0.670           & 0.650           & 0.711          & 0.490        & 0.568          & 0.329          & \textbf{0.417}          \\ \hline
GRW(W1)  & 0.518          & 0.641          & 0.542          & 0.679          & 0.652          & 0.716          & 0.508          & 0.598          & 0.353          & 0.369          \\
GRW(W2)  & 0.367          & 0.448          & 0.332          & 0.488          & 0.598          & 0.637          & 0.291          & 0.336          & 0.424          & 0.353          \\
GRW(W3)  & 0.493          & 0.653          & 0.468          & 0.645          & 0.652          & 0.719          & 0.462          & 0.611          & 0.178          & 0.400            \\ \hline
GBBN(W1) & 0.666          & 0.762          & 0.675          & 0.770           & 0.835          & 0.878          & 0.894         & 0.905          & 0.448          & 0.384          \\
GBBN(W2) & 0.675          & 0.757          & 0.669          & 0.761          & 0.835          & 0.876          & 0.882          & 0.881          & 0.441          & 0.380      \\
GBBN(W3) & \textbf{0.687} & \textbf{0.784} & \textbf{0.683} & \textbf{0.774} & \textbf{0.837} & \textbf{0.879} & \textbf{0.896} & \textbf{0.908} & \textbf{0.456} & 0.389    \\ \bottomrule
\end{tabular}
\vspace{-0.15in}
\end{table*}

% （1）整体上和原始方法的比较（2）不同方法效果比较（3）策略**是有效的（4）具体某个方法效果描述
The following observations can be obtained: (1) The overall performances of our three methods (i.e., GRS, GRW, and GBBN) are better than the conventional model SGC. (2) When with shuffle, GBBN achieved the best performance. GML did not improve the performance of its base models. The reason is discussed in the next subsection. (3) Among the three weighting strategies, the combination of label and $LDI$ weighting strategy (i.e., W3) performed best overall, indicating that the index $LDI$ does contain useful cues for training. 
% (4) On Pubmed, when GAT is used as the base model, the results are not as good as those obtained by GAT. The partial reason may be because this set contains only three categories and the imbalance is relatively trivial. In experiments without data shuffle, for Cora and Citeseer datasets, the GCN-based and GAT-based GBBNs are not as effective as other methods. Nevertheless, GRS and GRW still outperformed the original method.

\vspace{-0.1in}
\subsection{Discussion. }

The better performances of our three methods (GML is poor) over existing methods indicate the importance of considering the distribution characteristics of the imbalance. The combination of label and $LDI$ weighting strategy W3 is better in most cases under the inductive setting, indicating that $LDI$ contains meaningful training cues. Our future research will study the significance of this index in terms of larger size of the neighborhood (only 1-neighborhood is considered in this study) and more layers.

Further, we discuss why GML failed in the experiment. The principle of metric learning is to decrease the distance between samples of the same category and increase the distance between samples of different categories. However, in the graph data, edge connections exist among samples. The network in training exchanges information among the feature representations of connected samples during training, which will cause metric learning to reduce the distance between an anchor and a positive sample. However, it is also likely to reduce the distance between the anchor and the heterophily samples in the positive sample's neighbors. The feature coupling caused by data connection causes the metric learning strategy proposed in this study to be unsuitable for graph data sets. As far as we know, there is currently no research on metric learning specifically for GNNs, and exploring more sophisticated methods in the future would be a worthwhile step.

% \vspace{-0.1in}
\section{Conclusion}

This study focuses on the imbalanced distribution in learning with GNNs for graph data. Most benchmark data sets used in GNN studies exhibit an imbalanced distribution over categories. A node-level index called $LDI$ is defined to establish the connection between the imbalance and misclassification. Initial qualitative and quantitative analyses between $LDI$ and GNNs performances are conducted to reveal that samples with high $LDI$ values are more likely to be misclassified when layers increase. A new loss and four new methods (i.e. iFL, GRS, GRW, GML, and GBBN) are proposed based on $LDI$. Comparative experiment results show that the proposed methods are better than DR-GCN. Overall, the proposed GBBN with iFL and $LDI$ performs better than the other methods when with shuffle. The effectiveness of W3 indicates that the index $LDI$ does contain useful cues for training. 

%%
%% The next two lines define the bibliography style to be used, and
%% the bibliography file.
% if upload to arxiv, see https://hua-ys.github.io/2019/07/17/overleaf-arxiv/
%%% -*-BibTeX-*-
%%% Do NOT edit. File created by BibTeX with style
%%% ACM-Reference-Format-Journals [18-Jan-2012].

\bibliographystyle{ACM-Reference-Format}
%\bibliography{sample-sigconf}

%%
%% If your work has an appendix, this is the place to put it.

\setcounter{table}{0}
\setcounter{figure}{0}
\renewcommand{\thetable}{S\arabic{table}}
\renewcommand{\thefigure}{S\arabic{figure}}

\section*{Appendix}

\appendix

\section{Category distributions on three different data sets.}

As shown in Figure \ref{fig:figS1}, these three GNN benchmark data sets show imbalanced or highly-skewed distributions.

\begin{figure}[h]
    \centering
    \includegraphics[width=1.02\linewidth]{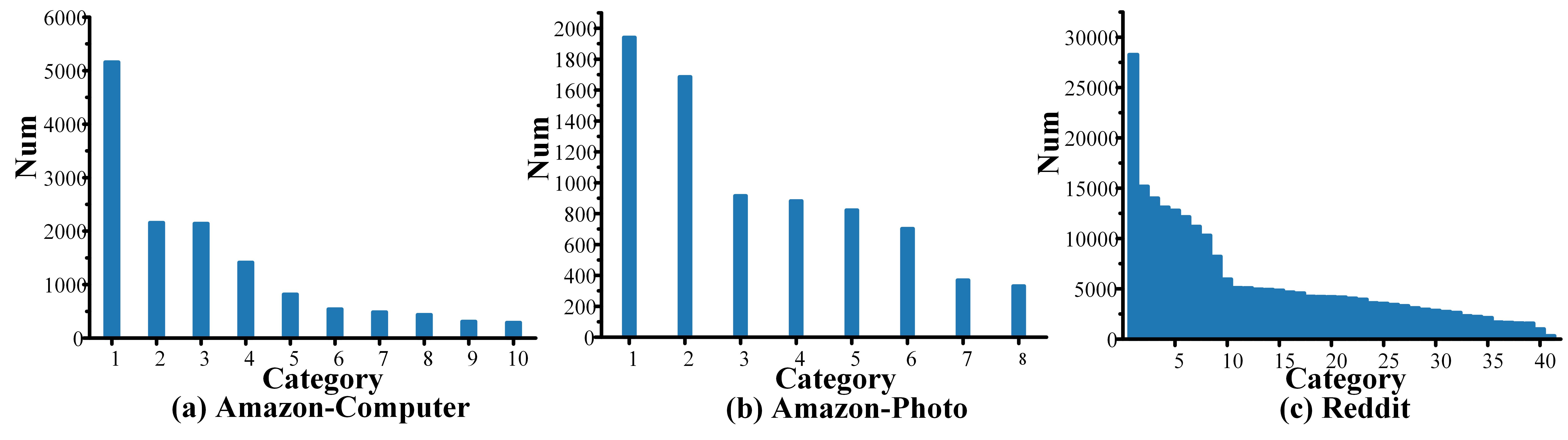}
     \vspace{-0.2in}
    \caption{Category distributions on three different datasets.}
    \label{fig:figS1}
    \vspace{-0.2in}
\end{figure}

\section{Proof of proposition.}

\begin{proposition}
Given a node $x_i$ belonging to the c$th$ category. When the proportion of homophily nodes in its neighborhood is fixed (i.e., $p_{i,{c}}$ is fixed), the less categories of heterophily nodes, the larger the $LDI$.
\end{proposition}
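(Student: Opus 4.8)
The plan is to work with the square of the index to eliminate the $\frac{1}{\sqrt{2}}$ factor and the outer square root. Writing $p := p_{i,c}$ for the (fixed) homophily proportion and recalling that $P_{\mathcal{I}_i}$ is the one-hot vector supported on coordinate $c$, I would expand
\begin{equation}
LDI_i^2 = \frac{1}{2}\left[(p-1)^2 + \sum_{j \ne c} p_{i,j}^2\right].
\end{equation}
Since $p$ is held fixed by hypothesis, the term $(p-1)^2$ is a constant, so the whole dependence of $LDI_i$ on the heterophily configuration is carried by the sum of squares $Q := \sum_{j \ne c} p_{i,j}^2$. It therefore suffices to show that $Q$ grows as the number of occupied heterophily categories shrinks.

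The key step is an extremal analysis of $Q$ under the linear constraint $\sum_{j \ne c} p_{i,j} = 1 - p =: S$, where exactly $m$ of the summands are positive. By the Cauchy--Schwarz (equivalently power-mean) inequality, for any $m$ nonnegative numbers summing to $S$ one has $Q \ge S^2/m$, with equality precisely when the mass is split evenly, $p_{i,j} = S/m$. Under this natural uniform allocation the index takes the closed form
\begin{equation}
LDI_i = \frac{1-p}{\sqrt{2}}\sqrt{1 + \frac{1}{m}},
\end{equation}
which I would check reproduces the numerical values reported in Figure~\ref{fig:fig2} (e.g. $p=\tfrac14,\ m=3$ gives $\tfrac{\sqrt{6}}{4}$, and $p=0,\ m=1$ gives $1$). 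This expression is strictly decreasing in $m$, which is exactly the claimed monotonicity: fewer heterophily categories (smaller $m$) yield a larger $LDI$.

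I expect the main obstacle to be that the proposition as stated does not pin down \emph{how} the heterophily mass $1-p$ is allocated among the $m$ categories, and $Q$ is not determined by $m$ alone. The honest reading is to compare the canonical uniform configurations across different $m$, for which the closed form above settles the matter in one line; moreover the bound $Q \ge S^2/m$ shows that the uniform case is the one minimizing $LDI$ for each fixed $m$, so even allowing arbitrary allocations the achievable lower envelope of $LDI$ still decreases with $m$. Making this comparison precise---phrasing the claim either about the uniform allocation or about the minimum over allocations---is the only genuinely delicate point; the remaining work is the routine expansion and the elementary monotonicity check that $\sqrt{1+1/m}$ decreases in $m$.
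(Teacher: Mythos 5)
Your proposal is correct and rests on the same engine as the paper's own argument --- the Cauchy(--Schwarz) inequality applied to $\sum_{c'\ne c}p_{i,c'}^2$ under the fixed-sum constraint $\sum_{c'\ne c}p_{i,c'}=1-p_{i,c}=\Delta$ --- but you push it one useful step further. The paper applies the inequality over all $C-1$ non-target categories and obtains only the two global extremes, $\Delta\sqrt{\tfrac{1+(C-1)^2}{2(C-1)^2}}\le LDI_i\le\Delta$, concluding that the maximum is attained exactly when all heterophily mass sits in a single category; it never addresses intermediate numbers of occupied categories, so the stated monotonicity "fewer categories $\Rightarrow$ larger $LDI$" is only really proved at the endpoints. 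You instead restrict the inequality to the $m$ \emph{occupied} heterophily categories, obtaining the per-$m$ lower envelope $LDI_i\ge\frac{1-p}{\sqrt{2}}\sqrt{1+\tfrac{1}{m}}$ with equality at the uniform split, which is genuinely monotone in $m$ and reproduces all six values in Figure~\ref{fig:fig2}. You are also right to flag the delicate point the paper glosses over: for arbitrary (non-uniform) allocations $Q=\sum_{j\ne c}p_{i,j}^2$ is not a function of $m$ alone, so the proposition is only literally true if read as a statement about uniform allocations or about the achievable lower envelope --- e.g.\ a split $(0.98,0.01,0.01)\Delta$ over three categories gives a larger $Q$ than $(0.9,0.1)\Delta$ over two. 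Your reading makes the proposition precise where the paper's proof leaves it informal ("the more concentrated, the larger"), at the modest cost of having to say explicitly which configurations are being compared.
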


\begin{proof}
Let $\sum_{c'\ne c}p_{i,c'}=1-p_{i,c}=\Delta$, then
\begin{equation}
    LDI_i = \frac{1}{\sqrt{2}}\sqrt{(1-p_{i,c})^2+\sum_{c'\ne c}p_{i,c'}^2}.
\end{equation}

By using Cauchy Inequality, we have
\begin{equation}
    \frac{(\sum_{c'\ne c}p_{i,c'})^2}{C-1} \leq \sum_{c'\ne c}p_{i,c'}^2 \leq (\sum_{c'\ne c}p_{i,c'})^2.
\end{equation}

If and only if $\forall c'$, $p_{i,c'}=\frac{\Delta}{C-1}$, then
\begin{equation}
    \frac{(\sum_{c'\ne c}p_{i,c'})^2}{C-1} = \sum_{c'\ne c}p_{i,c'}^2 = \frac{\Delta ^2}{(C-1)^2}.
\end{equation}

If and only if $p_{i,c''}=1-p_{i,c}$ and $p_{i,c}=0 (c'\ne c'')$, then
\begin{equation}
    \sum_{c' \ne c}p_{i,c'}^2 = (\sum_{c' \ne c}p_{i,c'})^2 = \Delta ^2.
\end{equation}

Accordingly, 
\begin{equation}
    \Delta \sqrt{\frac{1+(C-1)^2}{2(C-1)^2}} \leq LDI_i \leq \Delta.
\end{equation}

The upper bound is attained only when all the heterophily nodes belong to the same category. To sum up, the more concentrated the categories of heterophily nodes, the larger the $LDI$. The proof ends.
\end{proof}

\section{More validations in $r$ and accuracy with increasing \#layers.}

In Figure~\ref{fig:figS2}, as the $\#layers$ increase, the value of $r_n$ is greater than 1 in almost all cases. It shows that as the number of layers increases, samples with higher $LDI$s are more likely to be misclassified.

\begin{figure}[h]
    \centering
    \includegraphics[width=\linewidth]{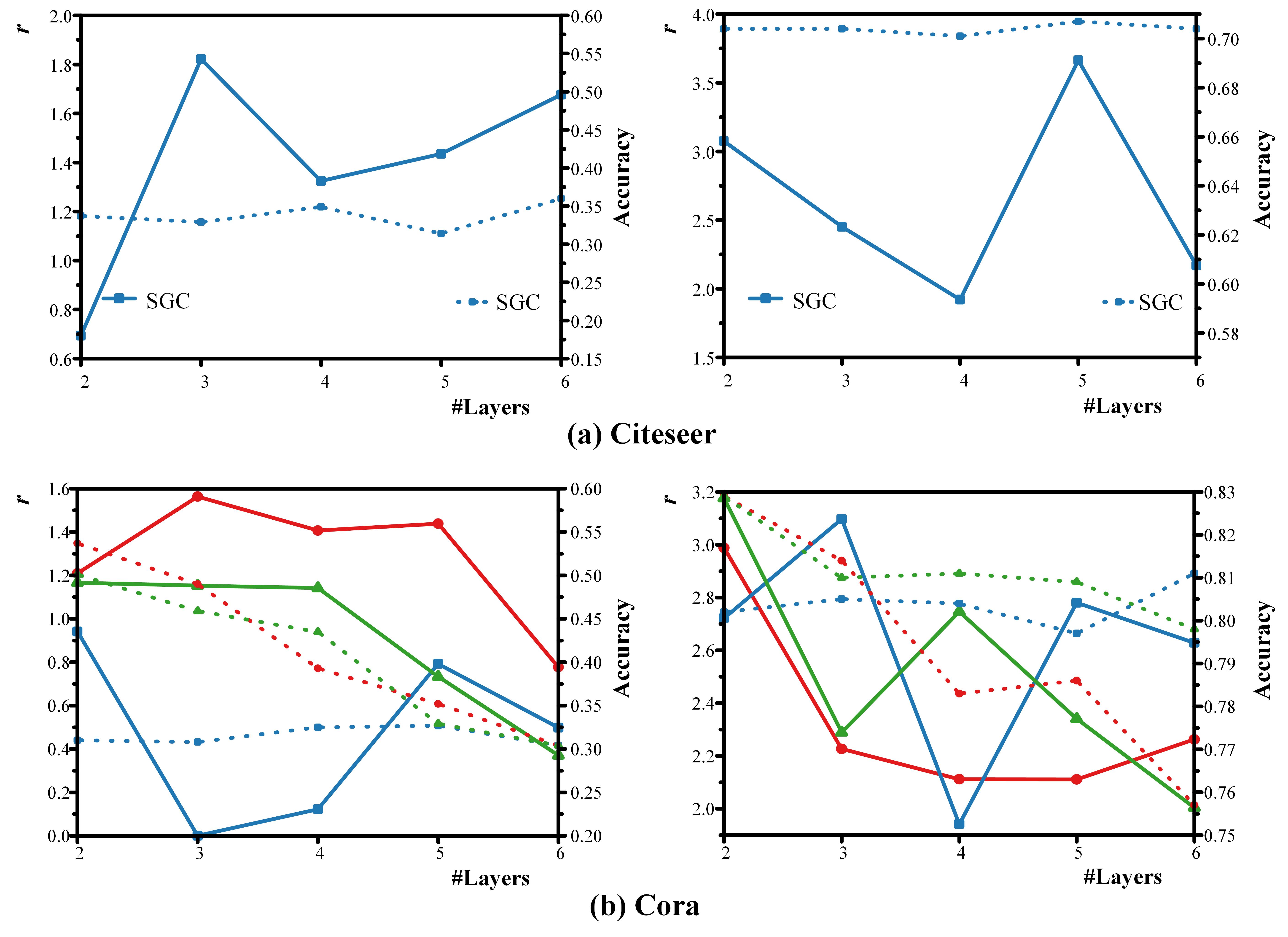}
    \vspace{-0.3in}
    \caption{(a) Variations in $r$ (solid lines) and accuracy (dot lines) with increasing \#layers using SGC under insductive (left) and transductive (right) settings on the Citeseer data set. (b) Variations in $r$ (solid lines) and accuracy (dot lines) with increasing \#layers using GCN, GAT and SGC under insductive (left) and transductive (right) settings on the Cora data set.}
    \label{fig:figS2}
% \vspace{-0.2in}
\end{figure}

\section{Dataset statistics.}

The data statistics are shown in Table \ref{tab:tabS1}.

\begin{table}[h]
\vspace{-0.1in}
\centering
\caption{Data statistics of the five Graph datasets.}
\vspace{-0.15in}
\label{tab:tabS1}
\begin{tabular}{ccccc}
\bottomrule
Name       & \#Node & \#Edge & \#Features & \#Class \\ \hline
Cora       & 2,708   & 5,429   & 1,433       & 7       \\ 
Citeseer   & 3,327   & 4,732   & 3,703       & 6       \\ 
Pubmed     & 19,717  & 44,338  & 500        & 3       \\ 
CoauthorCS & 18,333  & 81,894  & 6,805       & 15      \\ 
Flickr     & 89,250  & 899,756 & 500        & 7     \\\bottomrule
\end{tabular}
\vspace{-0.2in}
\end{table}

\section{Results on the transductive setting with shuffle.}

In this part, the results on transductive setting without shuffle are shown in Tables \ref{tab:tabS2} and \ref{tab:tabS3}, respectively. The following observations can be obtained: (1) Overall, GRS, GRW, and GBBN performed better as compared to the original models. (2) Our label-based  sampling strategy (W1) is effective. (3) GRW performed better than GRS and GBBN overall.

\begin{table*}[]
% \vspace{-0.2in}
\centering
\caption{Accuracy and G-mean on transductive setting without shuffle.}
\vspace{-0.1in}
\label{tab:tabS2}
\begin{tabular}{l|ll|ll|ll|ll}
\bottomrule
Data set  & \multicolumn{2}{c|}{Cora}       & \multicolumn{2}{c|}{Citeseer}   & \multicolumn{2}{c|}{Pubmed}     & \multicolumn{2}{c}{CS}      \\ \hline
         & Acc            & G-mean         & Acc            & G-mean         & Acc            & G-mean         & Acc            & G-mean         \\ \hline
GCN          & 0.849          & 0.903          & 0.748          & 0.814          & 0.841          & 0.878          & 0.918         & 0.938         \\ 
GRS(GCN,W1)  & 0.854          & 0.905          & 0.744          & 0.812          & 0.840          & 0.878          & 0.917         & 0.936         \\ 
GRW(GCN,W1)  & 0.845          & 0.901          & 0.748          & 0.819          & 0.828          & 0.871          & 0.882         & 0.928         \\
GBBN(GCN,W1) & 0.839          & 0.898          & 0.737          & 0.812          & 0.831          & 0.891          & 0.929         & 0.939         \\ \hline
SGC          & 0.847          & 0.894          & 0.736          & 0.802          & 0.816          & 0.858          & 0.881         & 0.880          \\
GRS(SGC,W1)  & 0.848          & 0.894          & 0.739          & 0.804          & 0.816          & 0.858          & 0.904         & 0.924         \\
GRW(SGC,W1)  & 0.713          & 0.864          & 0.706          & 0.795          & 0.775          & 0.834          & 0.111         & 0.522         \\
GBBN(SGC,W1) & 0.796          & 0.860          & 0.736          & 0.802          & 0.842          & 0.884          & 0.906         & 0.914         \\ \hline
GAT          & 0.855          & 0.904          & 0.748          & 0.810          & 0.846          & 0.881          & 0.914         & 0.934         \\
GRS(GAT,W1)  & 0.849          & 0.905          & 0.743          & 0.810          & 0.845          & 0.880          & 0.914         & 0.936         \\
GRW(GAT,W1)  & \textbf{0.857} & \textbf{0.907} & \textbf{0.754} & \textbf{0.824} & \textbf{0.858} & \textbf{0.892} & 0.894         & \textbf{0.940}         \\ 
GBBN(GAT,W1) & 0.835          & 0.890          & 0.735          & 0.804          & 0.855          & 0.875          & \textbf{0.930}& 0.934 \\ \bottomrule
\end{tabular}
% \vspace{-0.1in}
\end{table*}

\begin{table*}[h]
\begin{minipage}{\columnwidth}
\renewcommand{\arraystretch}{1.3}
\caption{Macro F1-score on transductive setting without shuffle.}
\vspace{-0.1in}
\label{tab:tabS3}
\centering
\setlength{\tabcolsep}{0.6mm}{
\begin{tabular}{l|llll}
\bottomrule
Dataset   & Cora           & Citeseer       & Pubmed         & CS             \\ \hline
GCN          & 0.836          & 0.696          & 0.835          & 0.895          \\
GRS(GCN,W1)  & 0.841          & 0.692          & 0.835          & 0.893          \\
GRW(GCN,W1)  & 0.833          & 0.706          & 0.824          & 0.858          \\
GBBN(GCN,W1) & 0.827          & 0.694          & 0.827          & 0.901          \\ \hline
SGC          & 0.834          & 0.672          & 0.811          & 0.813          \\
GRS(SGC,W1)  & 0.835          & 0.676          & 0.811          & 0.874          \\
GRW(SGC,W1)  & 0.724          & 0.673          & 0.774          & 0.158          \\
GBBN(SGC,W1) & 0.771          & 0.673          & 0.841          & 0.867          \\ \hline
GAT          & 0.843          & 0.683          & 0.840          & 0.889          \\
GRS(GAT,W1)  & 0.837          & 0.685          & 0.839          & 0.892          \\
GRW(GAT,W1)  & \textbf{0.845} & \textbf{0.714} & \textbf{0.854} & 0.871          \\
GBBN(GAT,W1) & 0.820          & 0.678          & 0.852          & \textbf{0.902}         \\ \bottomrule
\end{tabular}}
\end{minipage}
\begin{minipage}{\columnwidth}
\renewcommand{\arraystretch}{1.3}
\caption{Macro F1-score on inductive setting based on SGC without shuffle.}
\vspace{-0.1in}
\label{tab:tabS4}
\centering
\setlength{\tabcolsep}{0.6mm}{
\begin{tabular}{l|lllll}
\bottomrule
Dataset  & Cora           & Citeseer       & Pubmed         & CS             & Flickr         \\ \hline
SGC      & 0.601          & 0.634          & 0.808          & 0.717          & 0.148          \\
GML      & 0.595          & 0.632          & 0.809          & 0.718          & 0.148          \\ \hline
GRS(W1)  & 0.723          & 0.664          & 0.811          & 0.858          & 0.109          \\
GRS(W2)  & 0.596          & 0.640           & 0.808         & 0.701          & 0.157          \\
GRS(W3)  & 0.707          & 0.680           & 0.811         & 0.855          & \textbf{0.212} \\ \hline
GRW(W1)  & 0.744          & 0.680           & 0.811         & 0.872          & 0.201          \\
GRW(W2)  & 0.166          & 0.443          & 0.625          & 0.310          & 0.098          \\
GRW(W3)  & \textbf{0.748} & \textbf{0.682} & \textbf{0.837} & \textbf{0.878} & 0.042          \\ \hline
GBBN(W1) & 0.690          & 0.637          & 0.836          & 0.868          & 0.178          \\
GBBN(W2) & 0.673          & 0.634          & 0.836          & 0.854          & 0.151          \\
GBBN(W3) & 0.691          & 0.621          & 0.811          & 0.874          & 0.179        \\ \bottomrule
\end{tabular}}
\end{minipage}
% \vspace{-0.2in}%设置第二个表格与正文下文的空白行距离
\end{table*}

% （1）整体上和原始方法的比较（2）不同方法效果比较（3）策略**是有效的（4）具

\section{Results on the inductive setting.}

\subsection{Results on the inductive setting based on SGC without shuffle.}

In this part, the results on inductive setting based on SGC without shuffle are shown in Tables \ref{tab:tabS4} and \ref{tab:tabS5}, respectively. The following observations can be obtained: (1) The overall performances of our three methods (i.e., GRS, GRW, and GBBN) are better than the conventional model SGC. (2) When without shuffle, GRW achieved the best performance. (3) The effectiveness of W3 indicates that the index $LDI$ does contain useful cues for training.

\begin{table*}[t]
% \centering
\caption{Accuracy and G-mean on inductive setting based on SGC without shuffle.}
\vspace{-0.1in}
\label{tab:tabS5}
\begin{tabular}{l|ll|ll|ll|ll|ll}
\bottomrule
Dataset  & \multicolumn{2}{c|}{Cora}       & \multicolumn{2}{c|}{Citeseer}   & \multicolumn{2}{c|}{Pubmed}     & \multicolumn{2}{c|}{CS}         & \multicolumn{2}{c}{Flickr}      \\ \hline
          & Acc           & G-mean         & Acc            & G-mean         & Acc            & G-mean         & Acc            & G-mean         & Acc            & G-mean         \\ \hline
SGC      & 0.659          & 0.720           & 0.701         & 0.777          & 0.812          & 0.852          & 0.824          & 0.818          & 0.456          & 0.392          \\ 
GML      & 0.653          & 0.715          & 0.700          & 0.776          & 0.812          & 0.852          & 0.824          & 0.819          & 0.456          & 0.391          \\\hline
GRS(W1)  & 0.750          & 0.830           & 0.711         & 0.791          & 0.813          & 0.858          & 0.889          & 0.910           & 0.417          & 0.361          \\
GRS(W2)  & 0.655          & 0.715          & 0.705          & 0.780           & 0.811         & 0.851          & 0.816          & 0.809          & \textbf{0.457} & 0.397          \\
GRS(W3)  & 0.732          & 0.822          & 0.712          & 0.799          & 0.813          & 0.858          & 0.888          & 0.908          & 0.275          & \textbf{0.457} \\ \hline
GRW(W1)  & 0.764          & 0.844          & 0.691          & 0.801          & 0.811          & 0.859          & 0.895          & 0.923          & 0.308          & 0.429          \\
GRW(W2)  & 0.366          & 0.415          & 0.582          & 0.687          & 0.699          & 0.725          & 0.616          & 0.570           & 0.433          & 0.355          \\
GRW(W3)  & \textbf{0.767} & \textbf{0.846} & \textbf{0.719} & \textbf{0.803} & \textbf{0.837} & \textbf{0.878} & \textbf{0.908} & \textbf{0.925}  & 0.075          & 0.362          \\ \hline
GBBN(W1) & 0.721          & 0.801          & 0.691          & 0.776          & 0.811          & 0.860          & 0.907          & 0.918          & 0.441          & 0.407          \\
GBBN(W2) & 0.703          & 0.781          & 0.691          & 0.773          & 0.836          & 0.875          & 0.900          & 0.904          & 0.453           & 0.401          \\
GBBN(W3) & 0.721          & 0.804          & 0.673          & 0.763          & 0.835          & 0.877          & 0.899           & 0.921          & 0.441          & 0.407   \\ \bottomrule
\end{tabular}
% \vspace{-0.2in}
\end{table*}

\subsection{Results on the inductive setting based on GCN and GAT. }

We compare the effect of the model considering whether the $LDI$ is used on the inductive setting. Accordingly, all the three weighting strategies (W1, W2, and W3) are leveraged. The results based on GCN are shown in Tables \ref{tab:tabS6}, \ref{tab:tabS7}, \ref{tab:tabS8} and \ref{tab:tabS9}, and the results based on GAT are shown in Tables~\ref{tab:tabS10}, \ref{tab:tabS11}, \ref{tab:tabS12} and \ref{tab:tabS13}.

The following observations can be obtained: (1) The overall performances of our three methods (i.e., GRS, GRW, and GBBN) are better than the conventional models GCN and GAT. (2) When with shuffle, GBBN achieved the best performance based on GCN and GAT. (3) When without shuffle, GBBN achieved the best performance based on GCN, and GRW achieved the best performance based on GAT. (4) The effectiveness of W2 and W3 indicates that the index $LDI$ does contain useful cues for training. 

Overall, the proposed GBBN and GRW algorithms performed better than the other methods. The effectiveness of W2 and W3 indicates that the index $LDI$ does contain useful cues for training.

\begin{table*}[]
\caption{Accuracy and G-mean on inductive setting based on GCN with shuffle.}
\vspace{-0.1in}
\label{tab:tabS6}
\begin{tabular}{l|ll|ll|ll|ll|ll}
\bottomrule
Dataset  & \multicolumn{2}{c|}{Cora}       & \multicolumn{2}{c|}{Citeseer}   & \multicolumn{2}{c|}{Pubmed}     & \multicolumn{2}{c|}{CS}         & \multicolumn{2}{c}{Flickr}      \\ \hline
         & Acc            & G-mean         & Acc            & G-mean         & Acc            & G-mean         & Acc            & G-mean         & Acc            & G-mean         \\ \hline
GCN      & 0.524          & 0.627          & 0.554          & 0.678          & 0.666          & 0.719          & 0.471          & 0.528          & 0.442          & 0.376          \\
GML      & 0.499          & 0.626          & 0.534          & 0.668          & 0.665          & 0.721          & 0.442          & 0.513          & 0.442          & 0.377          \\ \hline
GRS(W1)  & 0.522          & 0.635          & 0.541          & 0.680           & 0.670           & 0.727          & 0.500            & 0.569       & 0.429          & 0.358          \\
GRS(W2)  & 0.509          & 0.614          & 0.538          & 0.670           & 0.670           & 0.725          & 0.481          & 0.536         & 0.430          & 0.377          \\
GRS(W3)  & 0.485          & 0.645          & 0.540           & 0.676          & 0.670           & 0.729          & 0.504          & 0.571         & 0.391          & 0.400            \\ \hline
GRW(W1)  & 0.526          & 0.648          & 0.555          & 0.692          & 0.664          & 0.729          & 0.469          & 0.555          & 0.411          & 0.308          \\
GRW(W2)  & 0.514          & 0.602          & 0.552          & 0.670           & 0.662          & 0.711          & 0.421          & 0.462          & 0.366          & \textbf{0.435} \\
GRW(W3)  & 0.525          & 0.648          & 0.545          & 0.691          & 0.670           & 0.733          & 0.415          & 0.568          & 0.394          & 0.381          \\ \hline
GBBN(W1) & 0.640           & 0.769          & 0.667          & 0.773         & 0.807            & 0.853          & 0.843          & 0.862          & 0.437          & 0.370           \\
GBBN(W2) & \textbf{0.674} & \textbf{0.778} & \textbf{0.668} & \textbf{0.775} & \textbf{0.808}  & \textbf{0.854} & \textbf{0.846} & \textbf{0.863} & \textbf{0.443}  & 0.358          \\
GBBN(W3) & 0.643          & 0.744          & 0.659          & 0.774          & 0.807          & 0.852          & 0.845           & 0.862          & 0.435           & 0.366          \\ \bottomrule
\end{tabular}
% \vspace{-0.2in}
\end{table*}

\begin{table*}[h]
\begin{minipage}{\columnwidth}
\renewcommand{\arraystretch}{1.3}
\caption{Macro F1-score on inductive setting based on GCN with shuffle.}
\vspace{-0.1in}
\label{tab:tabS7}
\centering
\setlength{\tabcolsep}{0.6mm}{
\begin{tabular}{l|lllll}
\bottomrule
Dataset  & Cora           & Citeseer       & Pubmed         & CS             & Flickr         \\ \hline
GCN                          & 0.449          & 0.507          & 0.643          & 0.325          & 0.129          \\
GML                          & 0.440          & 0.496          & 0.645          & 0.293          & 0.131          \\ \hline
GRS(W1)                      & 0.450          & 0.513          & 0.652          & 0.367          & 0.101          \\
GRS(W2)                      & 0.432          & 0.499          & 0.651          & 0.334          & 0.131          \\
GRS(W3)                      & 0.446          & 0.508          & 0.655          & 0.373          & 0.181          \\ \hline
GRW(W1)                      & 0.463          & 0.526          & 0.653          & 0.346          & \textbf{0.190} \\
GRW(W2)                      & 0.408          & 0.486          & 0.632          & 0.229          & 0.116          \\
GRW(W3)                      & 0.465          & 0.521          & 0.658          & 0.362          & 0.163          \\ \hline
GBBN(W1)                     & 0.625          & 0.640          & 0.808          & 0.786          & 0.123          \\
GBBN(W2)                     & \textbf{0.644} & \textbf{0.642} & \textbf{0.809} & \textbf{0.801} & 0.101          \\
GBBN(W3)                     & 0.602          & 0.638          & 0.808          & 0.707          & 0.115           \\ \bottomrule
\end{tabular}}
\end{minipage}
\begin{minipage}{\columnwidth}
\renewcommand{\arraystretch}{1.3}
\caption{Macro F1-score on inductive setting based on GCN without shuffle.}
\vspace{-0.1in}
\label{tab:tabS8}
\centering
\setlength{\tabcolsep}{0.6mm}{
\begin{tabular}{l|lllll}
\bottomrule
Dataset  & Cora           & Citeseer       & Pubmed         & CS             & Flickr         \\ \hline
GCN      & 0.726          & 0.652          & 0.833          & 0.857          & 0.148          \\
GML      & 0.694          & 0.627          & 0.834          & 0.793          & 0.144          \\ \hline
GRS(W1)  & 0.718          & 0.686          & 0.837          & 0.871          & 0.197           \\
GRS(W2)  & 0.725          & 0.665          & 0.836          & 0.860          & 0.150           \\
GRS(W3)  & 0.715          & 0.678          & 0.835          & 0.871          & \textbf{0.204}          \\ \hline
GRW(W1)  & 0.753          & 0.665          & 0.832          & 0.861          & 0.173          \\
GRW(W2)  & 0.715          & 0.661          & 0.830          & 0.778          & 0.136          \\
GRW(W3)  & \textbf{0.759} & 0.678          & 0.832          & 0.856          & 0.149          \\ \hline
GBBN(W1) & 0.717          & 0.679          & 0.845          & 0.870          & 0.119          \\
GBBN(W2) & 0.710          & 0.660          & 0.845          & 0.843          & 0.182          \\
GBBN(W3) & 0.716          & \textbf{0.688} & \textbf{0.846} & \textbf{0.872} & 0.186          \\ \bottomrule
\end{tabular}}
\end{minipage}
% \vspace{-0.2in}%设置第二个表格与正文下文的空白行距离
\end{table*}

\begin{table*}[]
% \vspace{-0.2in}
\caption{Accuracy and G-mean on inductive setting based on GCN without shuffle.}
\vspace{-0.1in}
\label{tab:tabS9}
\begin{tabular}{l|ll|ll|ll|ll|ll}
\bottomrule
Dataset  & \multicolumn{2}{c|}{Cora}       & \multicolumn{2}{c|}{Citeseer}   & \multicolumn{2}{c|}{Pubmed}     & \multicolumn{2}{c|}{CS}         & \multicolumn{2}{c}{Flickr}      \\ \hline
          & Acc           & G-mean         & Acc            & G-mean         & Acc            & G-mean         & Acc            & G-mean         & Acc            & G-mean         \\ \hline
GCN      & 0.751         & 0.820          & 0.702          & 0.783          & 0.838          & 0.871          & 0.895          & 0.907          & 0.456          & 0.394          \\
GML      & 0.735         & 0.801          & 0.676          & 0.767          & 0.838          & 0.872          & 0.859          & 0.873          & 0.453          & 0.391          \\ \hline
GRS(W1)  & 0.744         & 0.816          & 0.724          & 0.804          & 0.840          & 0.876          & 0.900          & 0.920          & 0.361          & 0.423          \\
GRS(W2)  & 0.753         & 0.817          & 0.710          & 0.791          & 0.840          & 0.876          & 0.894          & 0.909          & \textbf{0.457} & 0.396          \\ 
GRS(W3)  & 0.746         & 0.815          & 0.723          & 0.799          & 0.839          & 0.875          & 0.897          & 0.920          & 0.361          & \textbf{0.444}  \\\hline
GRW(W1)  & \textbf{0.780}& 0.840          & 0.721          & 0.795          & 0.834          & 0.873          & 0.889          & 0.915          & 0.219          & 0.412          \\
GRW(W2)  & 0.741         & 0.805          & 0.723          & 0.793          & 0.834          & 0.869          & 0.867          & 0.863          & 0.449          & 0.381          \\
GRW(W3)  & 0.744         & \textbf{0.846} & 0.721          & 0.799          & 0.835          & 0.875          & 0.887          & 0.910          & 0.168          & 0.441          \\ \hline
GBBN(W1) & 0.744         & 0.827          & \textbf{0.725} & 0.801          & 0.847          & 0.880          & 0.907          & 0.919          & 0.442          & 0.367          \\
GBBN(W2) & 0.737         & 0.816          & 0.715          & 0.793          & 0.846          & 0.880          & 0.897          & 0.901          & 0.431          & 0.410           \\
GBBN(W3) & 0.745         & 0.822          & 0.719          & \textbf{0.805} & \textbf{0.848} & \textbf{0.884} & \textbf{0.908} & \textbf{0.921} & 0.401          & 0.413   \\ \bottomrule
\end{tabular}
% \vspace{-0.2in}
\end{table*}

\begin{table*}[]
% \centering
\caption{Accuracy and G-mean on inductive setting based on GAT with shuffle.}
\vspace{-0.1in}
\label{tab:tabS10}
\begin{tabular}{l|ll|ll|ll|ll|ll}
\bottomrule
Dataset  & \multicolumn{2}{c|}{Cora}       & \multicolumn{2}{c|}{Citeseer}   & \multicolumn{2}{c|}{Pubmed}     & \multicolumn{2}{c|}{CS}         & \multicolumn{2}{c}{Flickr}      \\ \hline
         & Acc            & G-mean         & Acc            & G-mean         & Acc            & G-mean         & Acc            & G-mean         & Acc            & G-mean         \\ \hline
GAT      & 0.476          & 0.565          & 0.532          & 0.659          & 0.666          & 0.722          & 0.430           & 0.465          & 0.442          & 0.377          \\ 
GML      & 0.491          & 0.578          & 0.528          & 0.655          & 0.659          & 0.710          & 0.414          & 0.466          & 0.442          & 0.374          \\ \hline
GRS(W1)  & 0.488          & 0.629          & 0.538          & 0.678          & 0.669          & 0.740          & 0.522          & 0.636          & 0.384          & 0.376          \\
GRS(W2)  & 0.480          & 0.572          & 0.530          & 0.660          & 0.653          & 0.712          & 0.423          & 0.473          & 0.440           & 0.374          \\
GRS(W3)  & 0.484          & 0.627          & 0.509          & 0.655          & 0.673          & 0.742          & 0.498          & 0.613          & 0.358          & 0.400   \\ \hline
GRW(W1)  & 0.493          & 0.632          & 0.544          & 0.682          & 0.669          & 0.733          & 0.504          & 0.615          & 0.311          & 0.403          \\
GRW(W2)  & 0.481          & 0.567          & 0.547          & 0.671          & 0.661          & 0.712          & 0.387          & 0.432          & 0.439          & 0.373 \\
GRW(W3)  & 0.481          & 0.615          & 0.537          & 0.677          & 0.674          & 0.743          & 0.485          & 0.607          & 0.306          & \textbf{0.415} \\ \hline
GBBN(W1) & 0.628          & 0.727          & 0.657          & 0.750          & 0.818          & 0.860          & 0.882          & 0.884          & 0.431          & 0.363          \\
GBBN(W2) & 0.624          & 0.701          & 0.648          & 0.738          & 0.820          & 0.859          & 0.881          & 0.878          & 0.434          & 0.383          \\
GBBN(W3) & \textbf{0.629} & \textbf{0.728} & \textbf{0.658} & \textbf{0.757} & \textbf{0.822} & \textbf{0.870} & \textbf{0.884} & \textbf{0.888} & \textbf{0.449} & 0.370   \\ \bottomrule
\end{tabular}
% \vspace{-0.2in}
\end{table*}

\begin{table*}[h]
\begin{minipage}{\columnwidth}
\renewcommand{\arraystretch}{1.3}
\caption{Macro F1-score on inductive setting based on GAT with shuffle.}
\vspace{-0.1in}
\label{tab:tabS11}
\centering
\setlength{\tabcolsep}{0.6mm}{
\begin{tabular}{l|lllll}
\bottomrule
Dataset  & Cora           & Citeseer       & Pubmed         & CS             & Flickr         \\ \hline
GCN      & 0.363          & 0.474          & 0.649          & 0.238          & 0.132          \\
GML      & 0.382          & 0.471          & 0.631          & 0.251          & 0.127          \\ \hline
GRS(W1)  & 0.435          & 0.508          & 0.664          & 0.433          & 0.145          \\
GRS(W2)  & 0.378          & 0.475          & 0.637          & 0.260          & 0.127          \\
GRS(W3)  & 0.432          & 0.474          & 0.667          & 0.412          & 0.183          \\ \hline
GRW(W1)  & 0.438          & 0.515          & 0.657          & 0.421          & 0.168          \\
GRW(W2)  & 0.374          & 0.489          & 0.634          & 0.213          & 0.126          \\
GRW(W3)  & 0.418          & 0.507          & 0.667          & 0.406          & \textbf{0.185}          \\ \hline
GBBN(W1) & 0.580          & 0.601          & 0.819          & 0.812          & 0.112          \\
GBBN(W2) & 0.554          & 0.588          & 0.820          & 0.803          & 0.137          \\
GBBN(W3) & \textbf{0.586} & \textbf{0.617} & \textbf{0.821} & \textbf{0.817} & 0.124         \\ \bottomrule
\end{tabular}}
\end{minipage}
\begin{minipage}{\columnwidth}
\renewcommand{\arraystretch}{1.3}
\caption{Macro F1-score on inductive setting based on GAT without shuffle.}
\vspace{-0.1in}
\label{tab:tabS12}
\centering
\setlength{\tabcolsep}{0.6mm}{
\begin{tabular}{l|lllll}
\bottomrule
Dataset  & Cora           & Citeseer       & Pubmed         & CS             & Flickr         \\ \hline
GCN      & 0.723          & 0.673          & 0.832          & 0.829          & 0.148          \\
GML      & 0.709          & 0.647          & 0.830           & 0.821          & 0.149          \\ \hline
GRS(W1)  & 0.747          & 0.684          & 0.835          & 0.852          & 0.151          \\
GRS(W2)  & 0.717          & 0.658          & 0.835          & 0.828          & 0.149          \\
GRS(W3)  & 0.740          & 0.681          & 0.835          & 0.857          & \textbf{0.191} \\ \hline
GRW(W1)  & 0.755          & 0.683          & 0.834          & 0.857          & 0.151          \\
GRW(W2)  & 0.728          & 0.665          & 0.836          & 0.789          & 0.139          \\
GRW(W3)  & \textbf{0.765} & \textbf{0.689} & \textbf{0.838} & \textbf{0.865} & 0.144          \\ \hline
GBBN(W1) & 0.691          & 0.623          & 0.836          & 0.858          & 0.143          \\
GBBN(W2) & 0.680          & 0.614          & 0.832          & 0.841          & 0.148          \\
GBBN(W3) & 0.695          & 0.628          & 0.832          & 0.849          & 0.142         \\ \bottomrule
\end{tabular}}
\end{minipage}
% \vspace{-0.2in}%设置第二个表格与正文下文的空白行距离
\end{table*}

\begin{table*}[t]
\caption{Accuracy and G-mean on inductive setting based on GAT without shuffle.}
\vspace{-0.1in}
\label{tab:tabS13}
\begin{tabular}{l|ll|ll|ll|ll|ll}
\bottomrule
Dataset  & \multicolumn{2}{c|}{Cora}       & \multicolumn{2}{c|}{Citeseer}   & \multicolumn{2}{c|}{Pubmed}     & \multicolumn{2}{c|}{CS}         & \multicolumn{2}{c}{Flickr}      \\ \hline
         & Acc            & G-mean         & Acc            & G-mean         & Acc            & G-mean         & Acc            & G-mean         & Acc            & G-mean         \\ \hline
GCN      & 0.742          & 0.811          & 0.716          & 0.796          & 0.838          & 0.868          & 0.889          & 0.891          & 0.456          & 0.394          \\
GML      & 0.732          & 0.801          & 0.713          & 0.788          & 0.836          & 0.866          & 0.882          & 0.888          & 0.456          & 0.394          \\ \hline
GRS(W1)  & 0.764          & 0.850          & 0.721          & 0.803          & 0.839          & 0.875          & 0.891          & 0.908          & 0.383          & 0.382          \\
GRS(W2)  & 0.744          & 0.810          & 0.702          & 0.786          & 0.839          & 0.871          & 0.888          & 0.891          & \textbf{0.457} & 0.397          \\
GRS(W3)  & 0.759          & 0.836          & 0.715          & 0.801          & 0.839          & 0.875          & 0.894          & 0.911          & 0.331          & 0.407          \\ \hline
GRW(W1)  & 0.780          & 0.849          & 0.718          & 0.803          & 0.836          & 0.873          & 0.888          & 0.911          & 0.202          & 0.442          \\
GRW(W2)  & 0.749          & 0.814          & 0.717          & 0.793          & 0.836          & 0.875          & 0.872          & 0.869          & 0.452          & 0.385          \\
GRW(W3)  & \textbf{0.788} & \textbf{0.856} & \textbf{0.725} & \textbf{0.806} & \textbf{0.840} & \textbf{0.876} & \textbf{0.907} & \textbf{0.919} & 0.181          & \textbf{0.444} \\ \hline
GBBN(W1) & 0.715          & 0.806          & 0.682          & 0.769          & 0.837          & 0.875          & 0.893          & 0.910           & 0.436          & 0.392          \\
GBBN(W2) & 0.711          & 0.791          & 0.672          & 0.761          & 0.839          & 0.874          & 0.898          & 0.902          & 0.451          & 0.395          \\
GBBN(W3) & 0.724          & 0.807          & 0.684          & 0.770          & 0.833          & 0.872          & 0.898          & 0.904          & 0.453          & 0.388   \\ \bottomrule
\end{tabular}
% \vspace{-0.1in}
\end{table*}

\end{document}